\PassOptionsToPackage{dvipsnames}{xcolor}
\documentclass{article} 
\usepackage{iclr2021_conference,times}


\usepackage{amsmath,amsfonts,bm}









\def\eqref#1{equation~\ref{#1}}









\def\1{\bm{1}}










\DeclareMathAlphabet{\mathsfit}{\encodingdefault}{\sfdefault}{m}{sl}
\SetMathAlphabet{\mathsfit}{bold}{\encodingdefault}{\sfdefault}{bx}{n}











\newcommand{\E}{\mathbb{E}}

\newcommand{\R}{\mathbb{R}}



\DeclareMathOperator*{\argmax}{arg\,max}

\usepackage{bbm}
\usepackage{multirow}
\usepackage{multicol}
\usepackage{booktabs}
\usepackage{color}
\usepackage{hyperref}
\usepackage{xspace}
\usepackage{url}
\usepackage{times}
\usepackage{amsthm}
\usepackage{latexsym}
\usepackage{amsmath}
\usepackage{amssymb}
\usepackage{url}
\usepackage{tabularx}
\usepackage{graphicx}
\usepackage{makecell}
\usepackage{subcaption}
\usepackage{wrapfig}
\usepackage{algorithm}
\usepackage{algorithmic}
\usepackage{enumitem}
\newtheorem{theorem}{Theorem}[section]
\theoremstyle{definition}
\newtheorem{definition}{Definition}[section]
\newtheorem{lemma}[theorem]{Lemma}
\theoremstyle{remark}

\usepackage[dvipsnames]{xcolor}

\title{InfoBERT: Improving Robustness of Language Models from An Information Theoretic \\ Perspective}


\author{\thanks{Work was done during Boxin Wang's Summer internship in Microsoft Dynamics 365 AI Research.}$\;\,$Boxin Wang$^1$, Shuohang Wang$^2$, Yu Cheng$^2$, Zhe Gan$^2$, Ruoxi Jia$^3$, Bo Li$^1$, Jingjing Liu$^2$ \\
\small $^1$University of Illinois at Urbana-Champaign \; $^2$ Microsoft Dynamics 365 AI Research \; $^3$ Virginia Tech \\
\scriptsize \texttt{\{boxinw2,lbo\}@illinois.edu \, \{shuohang.wang,yu.cheng,zhe.gan,jingjl\}@microsoft.com}}

%

\newcommand{\method}{InfoBERT\xspace}

\newcommand{\modified}[1]{{\color{black}{#1}}}
\iclrfinalcopy 
\begin{document}

\maketitle

\begin{abstract}
Large-scale pre-trained language models such as BERT and RoBERTa have achieved state-of-the-art performance across a wide range of NLP tasks. Recent studies, however, show that such BERT-based models are vulnerable facing the threats of textual adversarial attacks. We aim to address this problem from an information-theoretic perspective, and propose InfoBERT, a novel learning framework for robust fine-tuning of pre-trained language models. InfoBERT contains two mutual-information-based regularizers for model training: ($i$) an Information Bottleneck regularizer, which suppresses noisy mutual information between the input and the feature representation; and ($ii$) an Anchored Feature regularizer, which increases the mutual information between local stable features and global features.
We provide a principled way to theoretically analyze and improve the robustness of language models in both standard and adversarial training. Extensive experiments demonstrate that InfoBERT 
    achieves state-of-the-art robust accuracy over several adversarial datasets on Natural Language Inference (NLI) and Question Answering (QA) tasks. 
Our code is available at \url{https://github.com/AI-secure/InfoBERT}.
    
\end{abstract}

\section{Introduction}

Self-supervised representation learning pre-trains good feature extractors from massive unlabeled data, which show promising transferability to various downstream tasks.
Recent success includes large-scale pre-trained language models (\emph{e.g.,} BERT, RoBERTa, and GPT-3 \citep{bert,roberta,gpt3}), which have advanced state of the art over a wide range of NLP tasks such as NLI and QA, even surpassing human performance. Specifically, in the computer vision domain, many studies have shown that self-supervised representation learning is essentially solving the problem of maximizing the mutual information (MI) $I(X;T)$ between the input $X$ and the representation $T$ \citep{infonce, pmlr-v80-belghazi18a,hjelm2018learning,simclr}. Since MI is computationally intractable in high-dimensional feature space, many MI estimators~\citep{pmlr-v80-belghazi18a} have been proposed to serve as lower bounds \citep{Barber2003TheIA,infonce} or upper bounds \citep{Cheng2020CLUBAC} of MI.  Recently, \citeauthor{Kong2020A} 
point out that the MI maximization principle of representation learning can be applied to not only computer vision but also NLP domain, and propose a unified view that recent pre-trained language models are maximizing a lower bound of MI among different segments of a word sequence. 

On the other hand, 
deep neural networks are known to be prone to adversarial examples \citep{Goodfellow2015ExplainingAH,Papernot2016DistillationAA,Eykholt2017RobustPA,MoosaviDezfooli2016DeepFoolAS}, \emph{i.e.,} the outputs of neural networks can be arbitrarily wrong when human-imperceptible adversarial perturbations are added to the inputs. 
Textual adversarial attacks typically perform word-level substitution \citep{hotflip, DBLP:conf/emnlp/AlzantotSEHSC18,weight_attack} or sentence-level paraphrasing \citep{scpn,DBLP:conf/naacl/ZhangBH19} to achieve semantic/utility preservation that seems innocuous to human, while fools NLP models. Recent studies \citep{textfooler, comattack, anli, t3} further show that even large-scale pre-trained language models (LM) such as BERT are vulnerable to adversarial attacks, which raises the challenge of building robust real-world LM applications against unknown adversarial attacks.

We investigate the robustness of language models from an information theoretic perspective, and propose a novel learning framework \method, which focuses on improving the robustness of language representations by fine-tuning both local features (word-level representation) and global features (sentence-level representation) for robustness purpose. \method considers two MI-based regularizers: $(i)$ the \emph{Information Bottleneck} regularizer manages to extract approximate minimal sufficient statistics 
for downstream tasks, while removing excessive and noisy information that may incur adversarial attacks; $(ii)$ the \emph{Anchored Feature} regularizer carefully selects useful local stable features that are invulnerable to adversarial attacks, and maximizes the mutual information between local stable features and global features to improve the robustness of the global representation. In this paper, we provide a detailed theoretical analysis to explicate the effect of \method for robustness improvement, along with extensive empirical adversarial evaluation to validate the theory.

Our contributions are summarized as follows. ($i$) We propose a novel learning framework \method from the information theory perspective, aiming to effectively improve the robustness of language models. ($ii$) We provide a principled theoretical analysis on model robustness, and propose two MI-based regularizers to refine the local and global features, which can be applied to both standard and adversarial training for different NLP tasks. ($iii$) Comprehensive experimental results demonstrate that \method can substantially improve robust accuracy by a large margin without sacrificing the benign accuracy, yielding the state-of-the-art performance across multiple adversarial datasets on NLI and QA tasks. 
\vspace{-1mm} 
\section{Related Work}
\vspace{-1mm}

\textbf{Textual Adversarial Attacks/Defenses} Most existing textual adversarial attacks focus on word-level adversarial manipulation. \citet{hotflip} is the first to propose a whitebox gradient-based attack to search for adversarial word/character substitution. Following work \citep{DBLP:conf/emnlp/AlzantotSEHSC18, weight_attack, comattack, textfooler} further constrains the perturbation search space and adopts Part-of-Speech checking to make NLP adversarial examples look natural to human. 

To defend against textual adversarial attacks, existing work can be classified into three categories:
($i$) \textit{Adversarial Training} is a practical method to defend against adversarial examples. Existing work either uses PGD-based attacks to generate adversarial examples in the embedding space of NLP as data augmentation \citep{freelb}, or regularizes the standard objective using virtual adversarial training \citep{smart,alum,gan2020large}. However, one drawback is that the threat model is often unknown, which renders adversarial training less effective when facing unseen attacks.
($ii$) \textit{Interval Bound Propagation} (IBP) \citep{ibp} is proposed as a new technique to consider the worst-case perturbation theoretically. Recent work \citep{ibp1,ibp2} has applied IBP in the NLP domain to certify the robustness of models. However, IBP-based methods rely on strong assumptions of model architecture and are difficult to adapt to recent transformer-based language models. 
($iii$) \textit{Randomized Smoothing} \citep{DBLP:conf/icml/CohenRK19} provides a tight robustness guarantee in $\ell_2$ norm by smoothing the classifier with Gaussian noise. \citet{safer} adapts the idea to the NLP domain, and replace the Gaussian noise with synonym words to certify the robustness as long as adversarial word substitution falls into predefined synonym sets. However, to guarantee the completeness of the synonym set is challenging.

\textbf{Representation Learning} MI maximization principle has been adopted by many studies on self-supervised representation learning \citep{infonce,pmlr-v80-belghazi18a,hjelm2018learning,simclr}. Specifically, InfoNCE \citep{infonce} is used as the lower bound of MI, forming the problem as contrastive learning \citep{DBLP:conf/icml/SaunshiPAKK19, yu2020finetuning}. However, \citet{infomin} suggests that the InfoMax \citep{DBLP:journals/computer/Linsker88} principle may introduce excessive and noisy information, which could be adversarial. 
To generate robust representation, \citet{worstcase} formalizes the problem from a mutual-information perspective, which essentially performs adversarial training for worst-case perturbation, while mainly considers the continuous space in computer vision. In contrast, \method originates from an information-theoretic perspective and is compatible with both standard and adversarial training for discrete input space of language models.



\vspace{-1mm}
\section{\method}
\vspace{-1mm}


Before diving into details, we  first discuss the textual adversarial examples we consider in this paper. We mainly focus on the dominant word-level  attack as the main threat model, since it achieves higher attack success and is less noticeable to human readers than other attacks. 
Due to the discrete nature of text input space, it is difficult to measure adversarial distortion on token level.
Instead, because most word-level adversarial attacks \citep{textbugger,textfooler} constrain word perturbations via the bounded magnitude in the semantic embedding space, \modified{by adapting from \citet{jacobsen2018excessive}}, we define the adversarial text examples with distortions constrained in the embedding space.

\begin{definition}\label{def:adv}($\epsilon$-bounded Textual Adversarial Examples). Given a sentence $x=[x_1;x_2;...;x_n]$, where $x_i$ is the word at the $i$-th position, the $\epsilon$-bounded adversarial sentence $x'=[x_1';x_2';...;x_n']$ for a classifier $\mathcal{F}$ satisfies: $(1)$ $\mathcal{F}(x) = o(x) = o(x')$ but $\mathcal{F}(x') \ne o(x')$, where $o(\cdot)$ is the oracle (\emph{e.g.}, human decision-maker); $(2)$ $||t_i - t_i'||_2 \le \epsilon$ for $i=1, 2, ..., n$, where $\epsilon \ge 0$ and $t_i$ is the word embedding of $x_i$. 
\end{definition}


\subsection{Information Bottleneck as a Regularizer}
In this section, we first discuss the general IB implementation, and then explain how IB formulation is adapted to \method as a regularizer  along with  theoretical analysis to support why IB regularizer can help improve the robustness of language models.
The IB principle formulates the goal of deep learning as an information-theoretic trade-off between representation compression and predictive power \citep{deepib}. Given the input source $X$, a deep neural net learns the internal representation $T$ of some intermediate layer and maximizes the MI between $T$ and label $Y$, so that $T$ subject to a constraint on its complexity contains sufficient information to infer the target label $Y$. 
Finding an optimal representation $T$ can be formulated as the maximization of
the Lagrangian
\begin{align} 
    \label{eq:ib}
    \mathcal{L}_{\text{IB}} = I(Y; T) - \beta I(X; T),
\end{align}
where $\beta > 0$ is a hyper-parameter to control the tradeoff, and  $I(Y;T)$ is defined as: 
\begin{align}
    \label{eq:mi}
    I(Y; T) = \int p(y, t) \log \frac{p(y,t)}{p(y)p(t)} \, \modified{dy \, dt \,}.
\end{align}
Since Eq.~(\ref{eq:mi}) is intractable, we instead use the lower bound from \citet{Barber2003TheIA}: 
\begin{align}
    \label{eq:lower}
    I(Y; T) \ge \int p(y, t) \log q_\psi(y \mid t) \, \modified{dy \, dt \,},
\end{align}
where  $q_\psi(y|t)$ is the variational approximation learned by a neural network parameterized by $\psi$ for the true distribution $p(y|t)$. This indicates that maximizing the lower bound of the first term of IB $I(Y; T)$ is equivalent to minimizing the task cross-entropy loss $\ell_{\text{task}} = H(Y \mid T)$. 

To derive a tractable lower bound of IB, we here use an upper bound \citep{Cheng2020CLUBAC} of $I(X;T)$ 
\begin{align}
\label{eq:upper}
    I(X; T) \le  \int  p(x, t) \log (p(t \mid x)) \,  \modified{dx \, dt \,} -  \int  p(x) p(t) \log (p(t \mid x)) \, \modified{dx \, dt \,}.
\end{align}
By combining Eq.~(\ref{eq:lower}) and (\ref{eq:upper}), we can maximize the tractable lower bound ${\mathcal{\hat{L}}}_{\text{IB}}$ of IB in practice by:
\begin{align}
\label{eq:all}
    \hat{\mathcal{L}}_{\text{IB}} = \frac{1}{N}\sum_{i=1}^{N} \big[ \log q_\psi({y}^{(i)} \mid {t}^{(i)}) \big] -  \frac{\beta}{N} \sum_{i=1}^{N} \Big[\log (p({t}^{(i)} \mid {x}^{(i)})) -  \frac{1}{N} \sum_{j=1}^{N} \log (p({t}^{(j)} \mid {x}^{(i)}))  \Big]
\end{align}
with data samples $\{{x}^{(i)}, {y}^{(i)}\}_{i=1}^{N}$, where   $q_\psi$ can represent any classification model (\emph{e.g.}, BERT), and   $p(t \mid x)$ can be viewed as the feature extractor $f_\theta: \mathcal{X} \rightarrow \mathcal{T}$, where $\mathcal{X}$ and $\mathcal{T}$ are the support of the input source $X$ and extracted feature $T$, respectively.

The above is a general implementation of IB objective function. In \method, we consider $T$ as the features consisting of the local word-level features after the BERT embedding layer $f_\theta$. The following BERT self-attentive layers along with the linear classification head serve as  $q_\psi(y|t)$ that predicts the target $Y$ given representation $T$. 

Formally, given random variables  $X=[X_1; X_2; ...; X_n]$ representing input sentences with $X_i$ (word token at $i$-th index), let $T=[T_1; ...; T_n]=f_\theta([X_1; X_2; ...; X_n]) = [f_\theta(X_1); f_\theta(X_2); ...; f_\theta(X_n)]$ denote the random variables representing the features generated from input $X$ via the BERT embedding layer $f_\theta$, where $T_i \in \R^{d}$ is the high-dimensional word-level local feature for word $X_i$. Due to the high dimensionality $d$ of each word feature (\emph{e.g.}, $1024$ for BERT-large), when the sentence length $n$ increases, the dimensionality of features $T$ becomes too large to compute $I(X;T)$ in practice. Thus, we propose to maximize a localized formulation of IB $\mathcal{L}_{\text{LIB}}$ defined as:
\begin{align} 
\label{eq:local_ib}
  \mathcal{L}_{\text{LIB}} := I(Y; T) - n\beta \sum_{i=1}^{n} I(X_i; T_i).
\end{align}
\begin{theorem} (Lower Bound of $\mathcal{L}_{\text{IB}}$) Given a sequence of random variables $X=[X_1; X_2; ...; X_n]$ and a deterministic feature extractor $f_\theta$, let $T=[T_1; ...; T_n] = [f_\theta(X_1); f_\theta(X_2); ...; f_\theta(X_n)]$. Then the localized formulation of IB $\mathcal{L}_{\text{LIB}}$ is a lower bound of $\mathcal{L}_{\text{IB}}$ (Eq.~(\ref{eq:ib})), i.e.,
\label{thm:local}
\begin{align}
     I(Y; T) - \beta I(X; T) \ge I(Y; T) - n\beta \sum_{i=1}^{n} I(X_i; T_i).
\end{align}
\end{theorem} 
Theorem \ref{thm:local} indicates that we can maximize \modified{the localized formulation of} $\mathcal{L}_{\text{LIB}}$ as a lower bound of IB $\mathcal{L}_{\text{IB}}$ when $I(X;T)$ is difficult to compute. In Eq.~(\ref{eq:local_ib}), if we regard the first term ($I(Y;T)$) as a task-related objective, the second term ($-n\beta \sum_{i=1}^{n} I(X_i; T_i)$) can be considered as a regularization term to constrain the complexity of representation $T$, thus named as \textit{Information Bottleneck regularizer}.  Next, we give a theoretical analysis for the adversarial robustness of IB and demonstrate why localized IB objective function can help improve the robustness to adversarial attacks. 

Following Definition \ref{def:adv}, let $T=[T_1;T_2;...;T_n]$ and $T'=[T_1';T_2';...;T_n']$ denote the features for the benign sentence $X$ and adversarial sentence $X'$. The distributions of $X$ and $X'$ are denoted by probability $p(x)$ and $q(x)$ with the support $\mathcal{X}$ and $\mathcal{X'}$, respectively. 
We assume that the feature representation $T$ has finite support denoted by $\mathcal{T}$ considering the finite vocabulary size in NLP.

\begin{theorem}\label{thm:bound} (Adversarial Robustness Bound) For random variables $X=[X_1; X_2; ...; X_n]$ and $X'=[X'_1; X'_2; ...; X'_n]$, let $T=[T_1;T_2;...;T_n]=[f_\theta(X_1); f_\theta(X_2); ...; f_\theta(X_n)]$ and $T'=[T'_1;T'_2;...;T'_n]=[f_\theta(X'_1); f_\theta(X'_2); ...; f_\theta(X'_n)]$ with finite support $\mathcal{T}$, where $f_\theta$ is a deterministic feature extractor. The performance gap between benign and adversarial data $|I(Y;T) - I(Y;T')|$ is bounded above by 
\begin{align}
    |I(Y;T) - I(Y; T') |  & \le B_0  
    + B_1 \sum_{i=1}^n \sqrt{|\mathcal{T}|}(I(X_i; T_i))^{1/2} + B_2 \sum_{i=1}^n |\mathcal{T}|^{3/4}(I(X_i;T_i))^{1/4} \nonumber \\ 
     &+  B_3 \sum_{i=1}^n \sqrt{|\mathcal{T}|}(I(X'_i; T'_i))^{1/2} + B_4 \sum_{i=1}^n |\mathcal{T}|^{3/4}(I(X'_i; T'_i))^{1/4},  
\end{align}
where $B_0, B_1, B_2, B_3$ and $B_4$ are  constants depending on the sequence length $n$, $\epsilon$ and $p(x)$. 
\end{theorem}

\modified{The sketch of the proof is to express the difference of $|I(Y;T) - I(Y';T)|$ in terms of $I(X_i;T_i)$. Specifically,  Eq.~(\ref{eq:init}) factorizes the difference into two summands. The first summand, the conditional entropy $|H(T \mid Y) - H(T' \mid Y)|$, can be bound by Eq. (\ref{eq:init1}) in terms of MI between benign/adversarial input and representation $I(X_i; T_i)$ and $I(X'_i; T'_i)$. The second summand $|H(T) - H(T')|$ has a constant upper bound (Eq.~(\ref{eq:final_bound3})), since language models have bounded vocabulary size and embedding space, and thus have bounded entropy.}

\modified{The intuition of Theorem \ref{thm:bound} is to bound the adversarial performance drop $|I(Y;T) - I(Y; T')|$ by $I(X_i;T_i)$.} As explained in Eq.~(\ref{eq:lower}), $I(Y;T)$ and $I(Y;T')$ can be regarded as the model performance on benign and adversarial data. Thus, the LHS of the bound represents such a performance gap. The adversarial robustness bound of Theorem \ref{thm:bound} indicates that the performance gap becomes closer when $I(X_i;T_i)$ and $I(X_i';T_i')$ decrease. Note that our IB regularizer in the objective function Eq.~(\ref{eq:local_ib}) achieves the same goal of minimizing $I(X_i;T_i)$ while learning the most efficient information features, or approximate minimal sufficient statistics, for downstream tasks. Theorem \ref{thm:bound} also suggests that combining adversarial training with our IB regularizer can further minimize $I(X_i';T_i')$,  leading to better robustness, which is verified in \S\ref{sec:exp}.

\subsection{Anchored Feature Regularizer}
In addition to the IB regularizer that suppresses noisy information that may incur adversarial attacks, we propose a novel regularizer termed  ``Anchored Feature Regularizer'', which extracts local stable features and aligns them with sentence global representations, thus improving the stability and robustness of language representations.  

The goal of the \textit{local anchored} feature extraction is to find features that carry useful and stable information for downstream tasks. Instead of directly searching for local anchored features, we start with searching for \textit{nonrobust} and \textit{unuseful} features. To identify local nonrobust features, we perform adversarial attacks to detect which words are prone to changes under adversarial word substitution. We consider these vulnerable words as features nonrobust to adversarial threats. Therefore, global robust sentence representations should rely less on these vulnerable statistical clues. On the other hand, by examining the adversarial perturbation on each local word feature, we can also identify words that are less useful for downstream tasks. For example, stopwords and punctuation usually carry limited information, and tend to have smaller adversarial perturbations than words containing more effective information. Although these unuseful features are barely changed under adversarial attacks, they contain insufficient information and should be discarded. After identifying the nonrobust and unuseful features, we treat the remaining local features in the sentences as useful stable features and align the global feature representation based on them. 

During the local anchored feature extraction, we perform ``virtual'' adversarial attacks that generate adversarial perturbation in the embedding space, as it abstracts the general idea for existing word-level adversarial attacks. Formally, given an input sentence $x=[x_1; x_2; ...; x_n]$ with its corresponding local embedding representation $t=[t_1; ...; t_n]$, where $x$ and $t$ are the realization of random variables $X$ and $T$, we generate adversarial perturbation $\delta$ in the embedding space so that the task loss $\ell_\text{task}$ increases. The adversarial perturbation $\delta$ is initialized to zero, and the gradient of the loss with respect to $\delta$ is calculated by $g({\delta}) = \nabla_\delta \ell_\text{task}(q_{\psi}({t} + \delta), y)$ to update $\delta \leftarrow \prod_{||\delta||_F \le \epsilon} ( \eta g(\delta) / ||g(\delta)||_F) $. The above process is similar to one-step PGD with zero-initialized perturbation $\delta$. Since we only care about the ranking of perturbation to decide on robust features, in practice we skip the update of $\delta$ to save computational cost, and simply examine the $\ell_2$ norm of the gradient $g({\delta})_i$ of the perturbation on each word feature ${t}_i$. A feasible plan is to choose the words whose perturbation is neither too large (nonrobust features) nor too small (unuseful features), \emph{e.g.}, the words whose perturbation rankings are among $50\% \sim 80\%$ of all the words. The detailed procedures are provided in Algorithm \ref{algo:selection}.
\begin{algorithm}[t]
\caption{\textbf{- Local Anchored Feature Extraction.} This algorithm takes in the word local features and returns the index of local anchored features.}
\label{algo:selection}
\begin{algorithmic}[1]
    \STATE {\bfseries Input:}  Word local features $t$, upper and lower threshold $c_h$ and $c_l$
    \STATE $\delta \leftarrow 0$ \quad \emph{// Initialize the perturbation vector $\delta$}
    \STATE $g({\delta}) = \nabla_\delta \ell_\text{task}(q_{\psi}({t} + \delta), y)$ \quad \emph{// Perform adversarial attack on the embedding space}
    \STATE Sort the magnitude of the gradient of the perturbation vector from $||g({\delta})_1||_2, ||g({\delta})_2||_2, ..., ||g({\delta})_n||_2$ into $||g({\delta})_{k_1}||_2, ||g({\delta})_{k_2}||_2, ..., ||g({\delta})_{k_n}||_2$ in ascending order, where $z_i$ corresponds to its original index.
    \STATE {\bfseries Return:} $k_i, k_{i+1}, ..., k_{j}$, where $c_l \le \frac{i}{n} \le \frac{j}{n} \le c_h$.
\end{algorithmic}
\end{algorithm}

After local anchored features are extracted, we propose to align sentence global representations $Z$ with our local anchored features $T_i$. \modified{In practice, we can use the final-layer [CLS] embedding to represent global sentence-level feature $Z$.} Specifically, we use the information theoretic tool to increase the mutual information $I(T_i; Z)$ between local anchored features $T_i$ and sentence global representations $Z$, so that the global representations can share more robust and useful information with the local anchored features and focus less on the nonrobust and unuseful ones. By incorporating the term $I(T_i; Z)$ into the previous objective function Eq.~(\ref{eq:local_ib}), our final objective function becomes: 
\begin{align}
    \label{eq:complete}
   \max \; I(Y; T) - n \beta \sum_{i=1}^{n} I(X_i; T_i) + \alpha \sum_{j=1}^M I(T_{k_j}; Z), 
\end{align}
where $T_{k_j}$ are the local anchored features selected by Algorithm \ref{algo:selection} \modified{and $M$ is the number of local anchored features}. An illustrative figure can be found in Appendix Figure \ref{fig:overall}.

In addition, due to the intractability of computing MI, we use InfoNCE \citep{infonce} as the lower bound of MI to approximate the last term $I(T_{k_j};Z)$:
\begin{align}
    \label{eq:local}
    \hat{I}^{\text{(InfoNCE})}(T_i; Z) := \mathbb{E_P}\Bigg[g_{\omega}(t_i, z) -  \mathbb{E}_{\mathbb{\tilde{P}}} \Big[ \log \sum_{t'_i} e^{g_{\omega}(t'_i, z)} \Big] \Bigg],
\end{align}
where $g_\omega(\cdot, \cdot)$ is a score function (or critic function) approximated by a neural network, $t_i$ are the positive samples drawn from the joint distribution $\mathbb{P}$ of local anchored features and global representations,  and $t'_i$ are the negative samples drawn from  the distribution of nonrobust and unuseful features $\mathbb{\tilde{P}}$.
\section{Experiments}\label{sec:exp}
In this section, we demonstrate how effective \method improves the robustness of language models over multiple NLP tasks such as NLI and QA. We evaluate \method against both strong adversarial datasets and state-of-the-art adversarial attacks.
\subsection{Experimental Setup}
\textbf{Adversarial Datasets} The following adversarial datasets and adversarial attacks are used to evaluate the robustness of \method and baselines.
\textbf{(I)} Adversarial NLI (ANLI)~\citep{anli} is a large-scale NLI benchmark, collected via an iterative, adversarial, human-and-model-in-the-loop procedure to attack BERT and RoBERTa. ANLI dataset is a strong adversarial dataset which can easily reduce the  accuracy of BERT$_{\text{Large}}$ to $0\%$. 
\textbf{(II)} Adversarial SQuAD~\citep{advsquad} dataset is an adversarial QA benchmark dataset generated by a set of handcrafted rules and refined by crowdsourcing. Since adversarial training data is not provided, we fine-tune RoBERTa$_{\text{Large}}$ on benign SQuAD training data~\citep{squad} only, and test the models on both benign and adversarial test sets. 
\textbf{(III)} TextFooler~\citep{textfooler} is the state-of-the-art word-level adversarial attack method to generate adversarial examples. To create an adversarial evaluation dataset, we sampled $1,000$ examples from the test sets of SNLI and MNLI respectively, and run TextFooler against BERT$_{\text{Large}}$ and RoBERTa$_{\text{Large}}$ to obtain the adversarial text examples.

\textbf{Baselines} Since IBP-based methods \citep{ibp1,ibp2} cannot be applied to large-scale language models yet, and the randomized-smoothing-based method \citep{safer} achieves limited certified robustness, we compare \method against three competitive baselines based on adversarial training: \textbf{(I)} FreeLB~\citep{freelb} applies adversarial training to language models during fine-tuning stage to improve generalization. In \S\ref{sec:results}, we observe that FreeLB can boost the robustness of language models by a large margin. 
\textbf{(II)} SMART~\citep{smart} uses adversarial training as smoothness-inducing regularization and Bregman proximal point optimization during fine-tuning, to improve the generalization and robustness of language models. 
\textbf{(III)} ALUM~\citep{alum} performs adversarial training in both pre-training and fine-tuning stages, which achieves substantial performance gain on a wide range of NLP tasks. Due to the high computational cost of adversarial training, we compare \method to ALUM and SMART with the best results reported in the original papers.

\textbf{Evaluation Metrics} 
We use robust accuracy or robust F1 score to measure how robust the baseline models and \method are when facing adversarial data. Specifically, robust accuracy is calculated by: $\text{Acc} =  \frac{1}{|D_{\text{adv}}|} \sum_{ \boldsymbol{x'} \in D_{\text{adv}}}{\mathbbm{1}[\argmax q_\psi(f_\theta(x')) \equiv {y}]}$, where  $D_{\text{adv}}$ is the adversarial dataset, $y$ is the ground-truth label, \modified{$\argmax$ selects the class with the highest logits} and $\mathbbm{1}(\cdot)$ is the indicator function. Similarly, robust F1 score is calculated by: $\text{F1} =  \frac{1}{|D_{\text{adv}}|} \sum_{ \boldsymbol{x'} \in D_{\text{adv}}} v(\argmax q_\psi(f_\theta(x')),  {a}) $, where $v(\cdot, \cdot)$ is the F1 score between the true answer $a$ and the predicted answer $\argmax q_\psi(f_\theta(x'))$, and \modified{$\argmax$ selects the answer with the highest probability}  (see \citet{squad} for details).

\textbf{Implementation Details}  To demonstrate \method is effective for different language models, we apply \method to both pretrained RoBERTa$_{\text{Large}}$ and BERT$_{\text{Large}}$. Since \method can be applied to both standard training and adversarial training, we here use FreeLB as the adversarial training implementation. \method is fine-tuned for 2 epochs for the QA task, and 3 epochs for the NLI task. More implementation details such as $\alpha, \beta, c_h, c_l$ selection can be found in Appendix \ref{ap:setup}.

\subsection{Experimental Results} \label{sec:results} 
\textbf{Evaluation on ANLI} As ANLI provides an adversarial training dataset, we evaluate models in two settings: $1)$ training models on benign data (MNLI~\citep{mnli} + SNLI~\citep{snli}) only, which is the case when the adversarial threat model is unknown; $2)$ training models on both benign and adversarial training data (SNLI+MNLI+ANLI+FeverNLI), which assumes the threat model is known in advance.

\begin{table}[t]
\centering
\resizebox{1.0\textwidth}{!}{
\begin{tabular}{c|c|l|cccc|cccc}
\toprule
\multirow{2}{*}{Training} & \multirow{2}{*}{{Model}} & \multirow{2}{*}{{Method}} &  \multicolumn{4}{c|}{Dev} &  \multicolumn{4}{c}{Test} \\
\cmidrule{4-11}
         &       &            &  A1 & A2 & A3 & ANLI &  A1 & A2 & A3 & ANLI \\
\midrule
\multirow{4}{*}{\shortstack{Standard \\Training}} & \multirow{2}{*}{\shortstack{RoBERTa}} & Vanilla & 49.1 & 26.5 & 27.2 & 33.8 & 49.2 & 27.6 & 24.8 & 33.2 \\
                                                  &                                       & InfoBERT       & 47.8 & 31.2 & 31.8 & \textbf{36.6} & 47.3 & 31.2 & 31.1 & \textbf{36.2}  \\
                                                  \cmidrule{2-11}
                                                  & \multirow{2}{*}{\shortstack{BERT}}    & Vanilla & 20.7 & 26.9 & 31.2 & 26.6 & 21.8 & 28.3 & 28.8 & 26.5 \\
                                                  &                                       & InfoBERT & 26.0 & 30.1 & 31.2 & \textbf{29.2} & 26.4 & 29.7 & 29.8 & \textbf{28.7} \\                   
\midrule
\midrule
\multirow{4}{*}{\shortstack{Adversarial\\Training}} & \multirow{2}{*}{\shortstack{RoBERTa}} & FreeLB   & 50.4 & 28.0 & 28.5 & 35.2 &  48.1 & 30.4  & 26.3 & 34.4 \\
                                                    &                                       & InfoBERT   & 48.4 & 29.3 & 31.3 & \textbf{36.0} &  50.0 & 30.6  & 29.3 & \textbf{36.2} \\
                                                  \cmidrule{2-11}
                                                  & \multirow{2}{*}{\shortstack{BERT}}    & FreeLB    & 23.0 & 29.0 & 32.2 & 28.3 & 22.2 & 28.5 & 30.8 & 27.4 \\
                                                  &                                       & InfoBERT   & 28.3 & 30.2 & 33.8 & \textbf{30.9} & 25.9 & 28.1 & 30.3 & \textbf{28.2} \\                          
\bottomrule
\end{tabular}
}
\caption{Robust accuracy on the ANLI dataset. Models are trained on the benign datasets (MNLI + SNLI) only. `A1-A3' refers to the rounds with increasing difficulty. `ANLI' refers to A1+A2+A3.} 
\label{tab:set1_acc}
\vspace{-2mm}
\end{table}
\begin{table}[t]
\centering
\resizebox{1.0\textwidth}{!}{
\begin{tabular}{c|c|l|cccc|cccc}
\toprule
\multirow{2}{*}{Training} & \multirow{2}{*}{{Model}} & \multirow{2}{*}{{Method}} &  \multicolumn{4}{c|}{Dev} &  \multicolumn{4}{c}{Test} \\
\cmidrule{4-11}
         &       &            &  A1 & A2 & A3 & ANLI &  A1 & A2 & A3 & ANLI \\
\midrule
\multirow{4}{*}{\shortstack{Standard \\Training}} & \multirow{2}{*}{\shortstack{RoBERTa}} & Vanilla   & 74.1 & 50.8 & 43.9 & 55.5 & 73.8 & 48.9 & 44.4 & 53.7\\
                                                  &                                       & InfoBERT   & 75.2 & 49.6 & 47.8 & \textbf{56.9} & 73.9 & 50.8 & 48.8 & \textbf{57.3} \\
                                                  \cmidrule{2-11}
                                                  & \multirow{2}{*}{\shortstack{BERT}}    & Vanilla &  58.5 & 46.1 & 45.5 & 49.8 & 57.4 & 48.3 & 43.5 & 49.3 \\
                                                  
                                                  &                                       & InfoBERT  & 59.3 & 48.9 & 45.5 & \textbf{50.9} & 60.0 & 46.9 & 44.8 & \textbf{50.2} \\
\midrule
\midrule 
\multirow{7}{*}{\shortstack{Adversarial\\Training}} & \multirow{4}{*}{\shortstack{RoBERTa}} & FreeLB     &  75.2 & 47.4 & 45.3 & 55.3 & 73.3 & 50.5 & 46.8 & 56.2 \\
                                                  &                                       &  SMART     &  74.5 & 50.9 & 47.6 & 57.1 & 72.4 & 49.8 & 50.3 & 57.1 \\
                                                  &                                       &  ALUM       &  73.3 & 53.4 & 48.2 & 57.7 & 72.3 & 52.1 & 48.4 & 57.0 \\
                                                  &                                       & InfoBERT    & 76.4 & 51.7 & 48.6 & \textbf{58.3} & 75.5 & 51.4 & 49.8 & \textbf{58.3}  \\
                                                  \cmidrule{2-11}
                                                  & \multirow{3}{*}{\shortstack{BERT}}     & FreeLB    &  60.3 & 47.1 & 46.3 & 50.9 & 60.3 & 46.8 & 44.8 & 50.2 \\
                                                  &                                       & ALUM        &  62.0 & 48.6 & 48.1 & \textbf{52.6} & 61.3 & 45.9 & 44.3 & 50.1 \\
                                                  &                                       & InfoBERT     &  60.8 & 48.7 & 45.9 & 51.4 & 63.3 & 48.7 & 43.2 & \textbf{51.2} \\
\bottomrule
\end{tabular}
}
\caption{Robust accuracy on the ANLI dataset.  Models are trained on both adversarial and benign datasets (ANLI (training) + FeverNLI + MNLI + SNLI). } 
\label{tab:set2_acc}
\vspace{-3mm}
\end{table}
Results of the first setting are summarized in Table \ref{tab:set1_acc}. 
The vanilla RoBERTa and BERT models perform poorly on the adversarial dataset. In particular, vanilla BERT$_{\text{Large}}$ with standard training achieves the lowest robust accuracy of $26.5\%$ among all the models. We also evaluate the robustness improvement by performing adversarial training during fine-tuning, and observe that adversarial training for language models can improve not only generalization but also robustness. In contrast, \method substantially improves robust accuracy in both standard and adversarial training. The robust accuracy of \method through standard training is even higher than the adversarial training baseline FreeLB for both RoBERTa and BERT, while the training time of \method is $1/3 \sim 1/2$ less than FreeLB. This is mainly because FreeLB requires multiple steps of PGD attacks to generate adversarial examples, while \method essentially needs only 1-step PGD attack for anchored feature selection. 

Results of the second setting are provided in Table 2, which shows \method can further improve robust accuracy for both standard and adversarial training. Specifically, when combined with adversarial training, \method achieves the state-of-the-art robust accuracy of $58.3\%$, outperforming all 
existing baselines. Note that although ALUM achieves higher accuracy for BERT on the dev set, it tends to overfit on the dev set, therefore performing  worse than \method on the test set. 

\textbf{Evaluation against TextFooler} \method can defend against not only human-crafted adversarial examples (\emph{e.g.}, ANLI) but also those generated by adversarial attacks (\emph{e.g.}, TextFooler). Results are summarized in Table \ref{tab:set1_tf_acc}. We can see that \method barely affects model performance on the benign test data, and in the case of adversarial training, \method even boosts the benign test accuracy. Under the TextFooler attack, the robust accuracy of the vanilla BERT drops to $0.0\%$ on both MNLI and SNLI datasets, while RoBERTa drops from $90\%$ to around $20\%$. We observe that both adversarial training and \method with standard training can improve robust accuracy by a comparable large margin, while \method with adversarial training achieves the best performance among all models, confirming the hypothesis in Theorem \ref{thm:bound} that combining adversarial training with IB regularizer can further minimize $I(X_i';T_i')$,  leading to better robustness than the vanilla one.

\begin{table}[t]
\centering
\resizebox{1.0\textwidth}{!}{
\begin{tabular}{c|c|l|cc|cccc}
\toprule
\multirow{2}{*}{Training} & \multirow{2}{*}{{Model}} & \multirow{2}{*}{{Method}}  & \multirow{2}{*}{{SNLI}} & {MNLI} & {adv-SNLI} & adv-MNLI & adv-SNLI & adv-MNLI \\
& & & & (m/mm) & (BERT) & (BERT) & (RoBERTa) & (RoBERTa)\\
\midrule
\multirow{4}{*}{\shortstack{Standard \\Training}} & \multirow{2}{*}{\shortstack{RoBERTa}} & Vanilla  & {92.6} & \textbf{90.8/90.6}  & {56.6} &  {68.1}/68.6 &  19.4 & 24.9/24.9  \\
                                                  &                                       & InfoBERT & \textbf{93.3} & {90.5/90.4}  & \textbf{59.8} &  \textbf{69.8/70.6} &  \textbf{42.5} & \textbf{50.3/52.1}  \\
                                                  \cmidrule{2-9}
                                                  & \multirow{2}{*}{\shortstack{BERT}}    & Vanilla   & {91.3} & \textbf{86.7/86.4} & 0.0 & 0.0/0.0 & 44.9 & 57.0/57.5   \\
                                                  &                                       & InfoBERT  &  \textbf{91.7} & 86.2/86.0 & \textbf{36.7} & \textbf{43.5}/\textbf{46.6} & \textbf{45.4} & \textbf{57.2/58.6}   \\
\midrule
\midrule
\multirow{4}{*}{\shortstack{Adversarial \\Training}} & \multirow{2}{*}{\shortstack{RoBERTa}} & FreeLB  & \textbf{93.4} & 90.1/{90.3} & {60.4} & 70.3/72.1 &  41.2 & 49.5/50.6  \\
                                                  &                                       & InfoBERT       & 93.1 & \textbf{90.7/90.4} & \textbf{62.3} & \textbf{73.2/73.1} &  \textbf{43.4} & \textbf{56.9/55.5}  \\
                                                  \cmidrule{2-9}
                                                  & \multirow{2}{*}{\shortstack{BERT}}    & FreeLB     & \textbf{92.4} & 86.9/86.5 & 46.6 & 60.0/60.7 & 50.5 & 64.0/62.9  \\
                                                  &                                       & InfoBERT     & 92.2 & \textbf{87.2/87.2} & \textbf{50.8} & \textbf{61.3/62.7} & \textbf{52.6} & \textbf{65.6/67.3}   \\
\bottomrule
\end{tabular}
}
\caption{Robust accuracy on the adversarial SNLI and MNLI(-m/mm) datasets generated by TextFooler based on blackbox BERT/RoBERTa (denoted in brackets of the header). Models are trained on the benign datasets (MNLI+SNLI) only.
}  \label{tab:set1_tf_acc}
\vspace{-3mm}
\end{table}

\textbf{Evaluation on Adversarial SQuAD} Previous experiments show that \method can improve model robustness for NLI tasks. Now we demonstrate that \method can also be adapted to other NLP tasks such as QA in Table \ref{tab:adv_squad}. Similar to our observation on NLI dataset, we find that \method barely hurts the performance on the benign test data, and even improves it in some cases. Moreover, \method substantially improves model robustness when presented with adversarial QA test sets (AddSent and AddOneSent). While adversarial training does help improve robustness, \method can further boost the robust performance by a larger margin. In particular, \method through standard training achieves the state-of-the-art robust F1/EM score as $78.5/72.9$ compared to existing adversarial training baselines, and in the meantime requires only half the training time of adversarial-training-based methods.

%


\begin{figure}[t]
\begin{minipage}{\textwidth}
  \begin{minipage}[b]{0.59\textwidth}
    \centering
    \resizebox{1.0\textwidth}{!}{
    \begin{tabular}{c|l|c|cc}
    \toprule
    Training & {Method}  & {benign} & {{AddSent}} & {AddOneSent} \\
    \midrule
    \multirow{2}{*}{\shortstack{Standard \\Training}  }    & Vanilla     & \textbf{93.5}/86.9  & 72.9/66.6  & 80.6/74.3 \\
                                                           & InfoBERT     & \textbf{93.5/87.0}  & \textbf{78.5/72.9}  & \textbf{84.6/78.3} \\ 
                                                           
    \midrule
    \midrule                                                      
    \multirow{3}{*}{\shortstack{Adversarial \\Training}}   & FreeLB     & \textbf{93.8}/\textbf{87.3}   & 76.3/70.3  & 82.3/76.2 \\
                                                           & ALUM         & -            & 75.5/69.4  &  81.4/75.9 \\
                                                           & InfoBERT     & 93.7/87.0  &  \textbf{78.0/71.8} &   \textbf{83.6/77.1} \\
    \bottomrule   
    \end{tabular}
    }
    \captionof{table}{Robust F1/EM scores based on RoBERTa$_{\text{Large}}$ on the adversarial SQuAD datasets (AddSent and AddOneSent). Models are trained on standard SQuAD 1.0 dataset.}\label{tab:adv_squad}
    \end{minipage}
     \hfill
     \begin{minipage}[b]{0.39\textwidth}
    \centering
    \includegraphics[trim=0 108 0 0, clip,width=\linewidth]{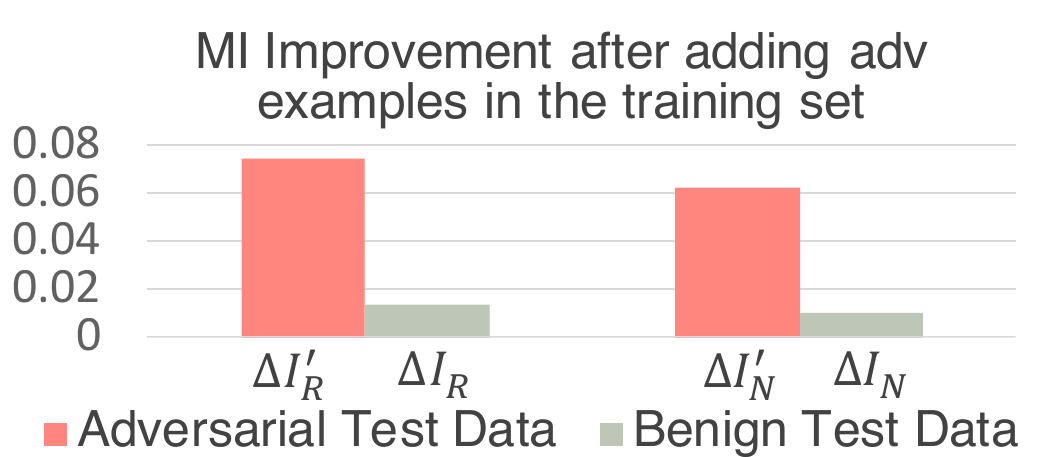}
    \captionof{figure}{Local anchored features contribute more to MI improvement than nonrobust/unuseful features, unveiling closer relation with robustness.}\label{fig:robust_analysis}
  \end{minipage}
\end{minipage}
\end{figure}

\subsection{Analysis of Local Anchored Features}

We conduct an ablation study to further validate that our anchored feature regularizer indeed filters out nonrobust/unuseful information. As shown in Table \ref{tab:set1_acc}  and \ref{tab:set2_acc}, adding adversarial data in the training set can significantly improve model robustness. To find out what helps improve the robustness from the MI perspective, we first calculate the MI between anchored features and global features $\frac{1}{M}\sum_{j=1}^M I(T_{k_j}; Z)$ on the adversarial test data and benign test data, based on the model trained without adversarial training data (denoted by $I'_{\text{R}}$ and $I_{\text{R}}$). We then calculate the MI between nonrobust/unuseful features and global features $\frac{1}{M'}\sum_{i=1}^{M'} I(T_{k_i}; Z)$ on the adversarial test data and benign data as well (denoted by $I'_{\text{N}}$ and $I_{\text{N}}$). After adding adversarial examples into the training set and re-training the model, we find that the MI between the local features and the global features substantially increases on the adversarial test data, which accounts for the robustness improvement. We also observe that those local anchored features extracted by our anchored feature regularizer, as expected, contribute more to the MI improvement. As shown in Figure \ref{fig:robust_analysis}, the MI improvement of anchored features on adversarial test data $\Delta I'_{\text{R}}$ (red bar on the left) is higher than that of nonrobust/unuseful $\Delta I'_{\text{N}}$ (red bar on the right), thus confirming that local anchored features discovered by our anchored feature regularizer 
have a stronger impact on robustness than nonrobust/unuseful ones. 

We conduct more ablation studies in Appendix \S\ref{ap:exp}, including analyzing \modified{the individual impact of two regularizers,} the difference between global and local features for IB regularizer, hyper-parameter selection strategy and so on.

\section{Conclusion}
In this paper, we propose a novel learning framework \method from an information theoretic perspective to perform robust fine-tuning over pre-trained language models. Specifically, InfoBERT consists of two novel regularizers to improve the robustness of the learned representations: (a) Information Bottleneck Regularizer, learning to extract the approximated minimal sufficient statistics and denoise the excessive spurious features, and (b) Local Anchored Feature Regularizer, which improves the robustness of global features by aligning them with local anchored features.  Supported by our theoretical analysis, \method provides a principled way to improve the robustness of BERT and RoBERTa against strong adversarial attacks over a variety of NLP tasks, including NLI and QA tasks. Comprehensive experiments demonstrate that \method outperforms existing baseline methods and achieves new state of the art on different adversarial datasets. We believe this work will shed light on future research directions towards improving the robustness of representation learning for language models. 


\section{Acknowledgement}
We gratefully thank the anonymous reviewers and meta-reviewers for their constructive feedback. We also thank Julia Hockenmaier, Alexander Schwing, Sanmi Koyejo, Fan Wu, Wei Wang, Pengyu Cheng, and many others for the helpful discussion.   This work is partially supported by NSF grant No.1910100, DARPA QED-RML-FP-003, and the Intel RSA 2020.

\bibliography{iclr2021_conference}
\bibliographystyle{iclr2021_conference}

\newpage
\appendix
\section{Appendix}

\begin{figure}[htp!]
    \centering
    \includegraphics[width=\linewidth]{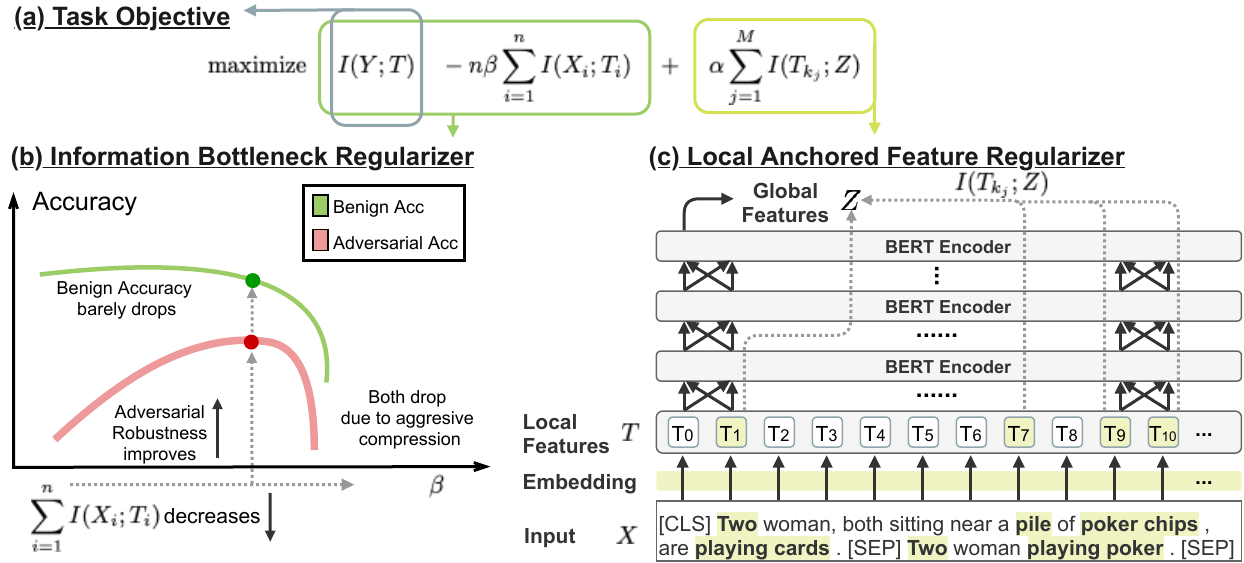}
    \caption{The complete objective function of \method, which can be decomposed into (a) standard task objective, (b) Information Bottleneck Regularizer, and (c) Local Anchored Feature Regularizer. For (b), we both theoretically and empirically demonstrate that we can improve the adversarial robustness by decreasing the mutual information of $I(X_i; T_i)$ without affecting the benign accuracy much. For (c), we propose to align the local anchored features $T_{k_j}$ (highlighted in Yellow) with the global feature $Z$ by maximizing their mutual information $I(T_{k_j}; Z)$. }
    \label{fig:overall}
\end{figure}

\subsection{Implementation Details} \label{ap:setup}

\paragraph[Model Details]{Model Details\footnote{We use the huggingface implementation  \url{https://github.com/huggingface/transformers} for BERT and RoBERTa.}}  BERT is a transformer \citep{DBLP:conf/nips/VaswaniSPUJGKP17} based model, which is unsupervised pretrained on large corpora. We use BERT$_\text{Large}$-uncased as the baseline model, which has $24$ layers,  $1024$ hidden units, $16$ self-attention heads, and $340$M parameters. RoBERTa$_\text{Large}$ shares the same architecture as BERT, but modifies key hyperparameters, removes the next-sentence pretraining objective and trains with much larger mini-batches and learning rates, which results in higher performance than BERT model on GLUE, RACE and SQuAD.

\paragraph{Standard Training Details} For both standard and adversarial training, we fine-tune \method for 2 epochs on the QA task, and for 3 epochs on the NLI task. The best model is selected based on the performance on the development set. All fine-tuning experiments are run on Nvidia V100 GPUs. For NLI task, we set the batch size to $256$, learning rate to $2 \times 10^{-5}$, max sequence length to $128$ and warm-up steps to 1000. For QA task,  we set the batch size to $32$, learning rate to $3\times 10^{-5}$ and max sequence length to $384$ without warm-up steps. 

\paragraph[Adversarial Training Details]{Adversarial Training Details\footnote{We follow the FreeLB implementations in \url{https://github.com/zhuchen03/FreeLB}.}} Adversarial training introduces hyper-parameters including adversarial learning rates, number of PGD steps, and adversarial norm. When combing adversarial training with \method, we use FreeLB as the adversarial training implementation, and set adversarial learning rate to $10^{-1}$ or $4*10^{-2}$, adversarial steps to $3$, maximal perturbation norm to $3* 10^{-1}$ or $2*10^{-1}$ and initial random perturbation norm to $10^{-1}$ or $0$.

\paragraph{Information Bottleneck Regularizer Details} For information bottleneck, there are different ways to model $p(t \mid x)$: 
\begin{enumerate}[leftmargin=*]
    \item \textbf{Assume that $p(t \mid x)$ is unknown.} We use a neural net parameterized by $q_\theta(t \mid x)$ to learn the conditional distribution $p(t \mid x)$. We assume the distribution is a Gaussian distribution. The neural net $q_\theta$ will learn the mean and variance of the Gaussian given input ${x}$ and representation ${t}$. By reparameterization trick, the neural net can be backpropagated to approximate the distribution given the training samples.
    \item \textbf{$p(t \mid x)$ is known.} Since ${t}$ is the representation encoded by BERT, we actually already know the distribution of $p$. We also denote it as $q_\theta$, where $\theta$ is the parameter of the BERT encoder $f_\theta$. If we assume the conditional distribution is a Gaussian $\mathcal{N}({t}_i, \sigma)$ for input $x_i$ whose mean is the BERT representation $t_i$ and variance is a fixed constant $\sigma$, the Eq.\ref{eq:local_ib} becomes
\begin{align}
    \label{eq:l2}
    \mathcal{L}_{\text{LIB}} = \frac{1}{N}\sum_{i=1}^{N} \Bigg( \big[ \log q_\psi({y}^{(i)} \mid {t}^{(i)}) \big] - \beta  \sum_{k=1}^{n} \Big[- 
    c(\sigma) || {t}'^{(i)}_k - {t}^{(i)}_k ||^2_2  +  \frac{1}{n} \sum_{j=1}^{N} c(\sigma) ||{t}_j - {t}_k ||^2_2  \Big] \Bigg),
\end{align}
    where $ c(\sigma)$ is a positive constant related to $\sigma$. In practice, the sample ${t}'_i$ from the conditional distribution Gaussian $\mathcal{N}({t}_i, \sigma)$  can be ${t}_i$ with some Gaussian noise, an adversarial examples of ${t}_i$, or ${t}_i$ itself (assume $\sigma=0$).
\end{enumerate}

We use the second way to model $p(t \mid x)$ for \method finally, as it gives higher robustness improvement than the first way empirically (shown in the following \S\ref{ap:exp}). We suspect that the main reason is because the first way needs to approximate the distribution $p(t \mid x)$ via another neural net which could present some difficulty in model training. 

Information Bottleneck Regularizer also introduces another parameter $\beta$ to tune the trad-off between representation compression $I(X_i;T_i)$ and predictive power $I(Y;T)$. We search for the optimal $\beta$ via grid search, and set $\beta=5 \times 10^{-2}$ for RoBERTa, $\beta=10^{-3}$ for BERT on the NLI task. On the QA task, we set $\beta=5 \times 10^{-5}$, which is substantially lower than $\beta$ on NLI tasks, thus containing more word-level features. We think it is mainly because the QA task relies more on the word-level representation to predict the exact answer spans. More ablation results can be found in the following \S\ref{ap:exp}.

\paragraph{Anchored Feature Regularizer Details} Anchored Feature Regularizer uses $\alpha$ to weigh the balance between predictive power and importance of anchored feature. We set $\alpha=5 \times 10^{-3}$ for both NLI and QA tasks. Anchored Feature Regularizer also introduces upper and lower threshold $c_l$ and $c_h$ for anchored feature extraction. We set $c_h = 0.9$ and $c_l = 0.5$ for the NLI task, and set $c_h = 0.95$ and $c_l = 0.75$ for the QA task. The neural MI estimator used by infoNCE uses two-layer fully connected layer to estimate the MI with the intermediate layer hidden size set to $300$.




\subsection{Additional Experimental Results} \label{ap:exp}
\subsubsection{Ablation Study on Information Bottleneck Regularizer}
\paragraph{Modeling $p(t \mid x)$} As discussed in \S\ref{ap:setup}, we have two ways to model $p( t\mid x)$: ($i$) using an auxiliary neural network to approximate the distribution; ($ii$) directly using the BERT encoder $f_\theta$ to calculate the $p(t \mid x)$. Thus we implemented these two methods and compare the robustness improvement in Table \ref{tab:ablation_p}.  To eliminate other factors such as Anchored Feature Regularizer and adversarial training, we set $\alpha=0, \beta=5 \times 10^{-2}$ and conduct the following ablation experiments via standard training on standard datasets. We observe that although both modeling methods can improve the model robustness, modeling as BERT encoder gives a larger margin than the Auxiliary Net. Moreover, the second way barely sacrifices the performance on benign data, while the first way can hurt the benign accuracy a little bit. Therefore, we use the BERT Encoder $f_\theta$ to model the $p(t \mid x)$ in our main paper.

\begin{table}[t]
\centering
{
\begin{tabular}{c|c|l|c|c}
\toprule
\multirow{2}{*}{Model} & \multirow{2}{*}{{Datasets}} & \multirow{2}{*}{{Method}} & Adversarial Accuracy & Benign Accuracy \\
& & & (ANLI) & (MNLI/SNLI) \\
\midrule
\multirow{3}{*}{BERT} & \multirow{3}{*}{\shortstack{Standard \\Datasets}} & Vanilla & 26.5 & \textbf{86.7}/91.3 \\
\cmidrule{3-5}
& & Auxiliary Net & 27.1 & 83.1/90.7 \\
\cmidrule{3-5}
& & BERT Encoder $f_\theta$ & \textbf{27.7} & 85.9/\textbf{91.7} \\
\bottomrule
\end{tabular}
}
\caption{Robust accuracy on the ANLI dataset. Here we refer ``Standard Datasets'' as training on the benign datasets (MNLI + SNLI) only. ``Vanilla'' refers to the vanilla BERT trained without Information Bottleneck Regularizer.} 
\label{tab:ablation_p}
\end{table}

\begin{table}[b]
\centering
{
\begin{tabular}{c|c|l|c|c}
\toprule
\multirow{2}{*}{Model} & \multirow{2}{*}{{Datasets}} & \multirow{2}{*}{{Features}} & Adversarial Accuracy & Benign Accuracy \\
& & & (ANLI) & (MNLI/SNLI) \\
\midrule
\multirow{6}{*}{RoBERTa} & \multirow{3}{*}{\shortstack{Standard \\Datasets}} & Vanilla & 33.2 & 90.8/92.6 \\
\cmidrule{3-5}
& & Global Feature & 33.8 & 90.4/93.5 \\
\cmidrule{3-5}
& & Local Feature & \textbf{33.9} & \textbf{90.6/93.7} \\
\cmidrule{2-5}
& \multirow{3}{*}{\shortstack{Standard\\and\\Adversarial \\Datasets}} & Vanilla & 53.7 & \textbf{91.0}/92.6 \\
\cmidrule{3-5}
& & Global Feature & 55.1 & 90.8/\textbf{93.3} \\
\cmidrule{3-5}
& & Local Feature & \textbf{56.2} & 90.5/\textbf{93.3} \\
\bottomrule
\end{tabular}
}
\caption{Robust accuracy on the ANLI dataset. Here we refer ``Standard Datasets'' as training on the benign datasets (MNLI + SNLI) only, and ``Standard and Adversarial Datasaets'' as training on the both benign and adversarial datasets (ANLI(trianing) + MNLI + SNLI + FeverNLI). ``Vanilla'' refers to the vanilla RoBERTa trained without Information Bottleneck Regularizer.} 
\label{tab:ablation_global}
\end{table}

\paragraph{Local Features v.s. Global Features} Information Bottlenck Regularizer improves model robustness by reducing $I(X;T)$. In the main paper, we use $T$ as word-level local features. Here we consider $T$ as sentence-level global features, and compare the robustness improvement with $T$ as local features. To eliminate other factors such as Anchored Feature Regularizer and adversarial training, we set $\alpha=0, \beta=5 \times 10^{-2}$ and conduct the following ablation experiments via standard training. 

The experimental results are summarized in Table \ref{tab:ablation_global}. We can see that while both features can boost the model robustness, using local features yield higher robust accuracy improvement than global features, especially when adversarial training dataset is added. 

\paragraph{Hyper-parameter Search} We perform grid search to find out the optimal $\beta$ so that the optimal trade-off between representation compression (``minimality'') and predictive power (``sufficiency'') is achieved. An example to search for the optimal $\beta$ on QA dataset is shown in Fingure \ref{fig:ablation_beta}, which illustrates how $\beta$ affects the F1 score on benign and adversarial datasets. We can see that from a very small $\beta$, both the robust and benign F1 scores increase, demonstrating \method can improve both robustness and generalization to some extent. When we set $\beta=5\times10^{-5}$ ($\log(\beta)=-9.9$), \method achieves the best benign and adversarial accuracy. When we set a larger $\beta$ to further minimize $I(X_i;T_i)$, we observe that the benign F1 score starts to drop, indicating the increasingly compressed representation could start to hurt its predictive capability.
\begin{figure}[t]
    \centering
    \includegraphics[width=0.7\linewidth]{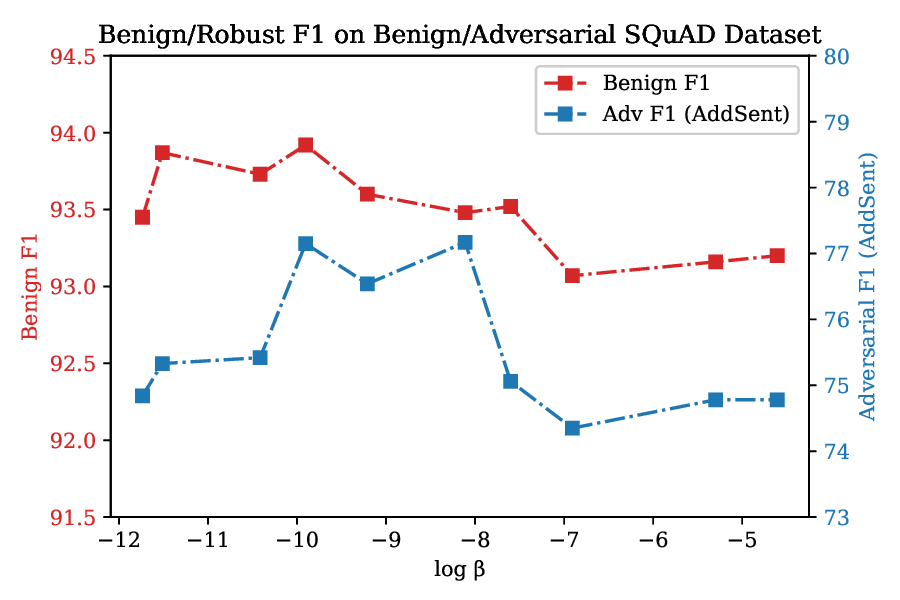}
    \caption{Benign/robust F1 score on benign/adversarial QA datasets. Models are trained on the benign SQuAD dataset with different $\beta$.}
    \label{fig:ablation_beta}
\end{figure}

\subsubsection{Ablation Study on Anchored Feature Regularizer}
\paragraph{Visualization of Anchored Words} To explore which local anchored features are extracted, we conduct another ablation study to visualize the local anchored words. We follow the best hyper-parameters of Anchored Feature Regularizer introduced in \S\ref{ap:setup}, use the best BERT model trained on benign datasets (MNLI + SNLI) only and test on the ANLI dev set. We visualize the local anchored words in Table \ref{tab:ablation_anchor} as follows. In the first example, we find that Anchored Features mainly focus on the important features such as quantity number ``Two'', the verb ``playing'' and objects ``card''/``poker'' to make a robust prediction. In the second example, the matching robust features between hypothesis and premise, such as ``people'', ``roller'' v.s. ``park'', ``flipped upside'' v.s. ``ride'', are aligned to infer the relationship of hypothesis and premise. These anchored feature examples confirm that Anchored Feature Regularizer is able to find out useful and stable features to improve the robustness of global representation.

\begin{table}[htp!]\small \setlength{\tabcolsep}{7pt}
\centering
\begin{tabular}{p{13.8cm}}
\toprule Input (\textbf{bold} = local stable words for local anchored features.) \\
\midrule
\textbf{Premise:} \textbf{Two} woman, both sitting near a \textbf{pile} of \textbf{poker} \textbf{chips}, are \textbf{playing} \textbf{cards}.  \\
\textbf{Hypothesis:} \textbf{Two} woman \textbf{playing} \textbf{poker}.  \\
\midrule
\textbf{Premise:} \textbf{People} are \textbf{flipped} \textbf{upside} - down on a bright yellow \textbf{roller} coaster.  \\
\textbf{Hypothesis:} \textbf{People} \textbf{on} on a \textbf{ride} at an amusement \textbf{park}.  \\
\bottomrule
\end{tabular}
\caption{Local anchored features extracted by Anchored Feature Regularizer.}
\label{tab:ablation_anchor}
\end{table}

\modified{
\subsubsection{Ablation Study on Disentangling Two Regularizers}
To understand how two regularizers contribute to the improvement of robustness separetely, we apply two regularizers individually to both the standard training and adversarial training. We refer InfoBERT trained with IB regularizer only as ``InfoBERT (IBR only)'' and InfoBERT trained with Anchored Feature Regularizer only as ``InfoBERT (AFR only)''. ``InfoBERT (Both)'' is the standard setting for InfoBERT, where we incorporate both regularizers during training. For ``InfoBERT (IBR only)'', we set $\alpha = 0$  and perform grid search to find the optimal $\beta=5 \times 10^{-2}$. Similarly for ``InfoBERT (AFR only)'', we set $\beta = 0$ and find the optimal parameters as $\alpha=5 \times 10^{-3}, c_h=0.9$ and $c_l=0.5$.

The results are shown in Table \ref{tab:ablation_separately}. We can see that both regularizers improve the robust accuracy on top of vanilla and FreeLB to a similar margin. Applying one of the regularizer can achieve similar performance of FreeLB, but the training time of InfoBERT is only $1/3 \tilde 1/2$ less than FreeLB. Moreover, after combining both regularizers, we observe that InfoBERT achieves the best robust accuracy.
}

\begin{table}[ht]
\centering
{
\modified{
\begin{tabular}{c|c|l|c|c}
\toprule
\multirow{2}{*}{Model} & \multirow{2}{*}{{Training}} & \multirow{2}{*}{{Method}} & Adversarial Accuracy & Benign Accuracy \\
& & & (ANLI) & (MNLI/SNLI) \\
\midrule
\multirow{8}{*}{BERT} & \multirow{4}{*}{\shortstack{Standard \\Training}} & Vanilla & 26.5 & {86.7}/91.3 \\
\cmidrule{3-5}
& & InfoBERT (IBR only) & {27.7} & 85.9/{91.7} \\
\cmidrule{3-5}
& & InfoBERT (AFR only) & 28.0 & 86.6/91.9    \\
\cmidrule{3-5}
& & InfoBERT (Both) & 29.2 & 85.9/91.6  \\
\cmidrule{2-5}
& \multirow{4}{*}{\shortstack{Adversarial \\Training}} & FreeLB & 27.7 & 86.7/\textbf{92.3} \\
\cmidrule{3-5}
& & InfoBERT (IBR only) & {29.3} & 87.0/\textbf{{92.3}} \\
\cmidrule{3-5}
& & InfoBERT (AFR only) & 30.3 & 86.9/\textbf{92.3}  \\
\cmidrule{3-5}
& & InfoBERT (Both) & \textbf{30.9} & \textbf{87.2}/92.2  \\
\bottomrule
\end{tabular}
}
}
\caption{Robust accuracy on the ANLI dataset. Models are trained on the benign datasets (MNLI + SNLI). Here we refer ``IBR only'' as training with Information Bottleneck Regularizer only. ``AFR only'' refers to InfoBERT trained with Anchored Feature Regularizer only. ``Both'' is the standard InfoBERT that applies two regularizers together.} 
\label{tab:ablation_separately}
\end{table}

\subsubsection{Examples of Adversarial Datasets generated by TextFooler} We show some adversarial examples generated by TextFooler in Table \ref{tab:ablation_eg}. We can see most adversarial examples are of high quality and look valid to human while attacking the NLP models, thus confirming our adversarial dastasets created by TextFooler is a strong benchmark dataset to evaluate model robustness. However, as also noted in \citet{textfooler}, we observe that some adversarial examples look invalid to human For example, in the last example of Table \ref{tab:ablation_eg}, TextFooler replaces ``stand'' with ``position'', losing the critical information that ``girls are \textbf{standing} instead of kneeling'' and fooling both human and NLP models. Therefore, we expect that \method should achieve better robustness when we eliminate such invalid adversarial examples during evaluation.
\begin{table}[htp!]\small \setlength{\tabcolsep}{7pt}
\centering
\begin{tabular}{p{13.8cm}}
\toprule Input (\textcolor{red}{red} = Modified words, \textbf{bold} = original words.) \\
\midrule
\midrule
\textbf{\textit{Valid Adversarial Examples} }\\
\midrule 

\textbf{Premise:} A young boy is playing in the sandy water.\\
\textbf{Original Hypothesis: } There is a \textbf{boy} in the water. \\
\textbf{Adversarial Hypothesis: } There is a \textcolor{red}{man} in the water. \\
\\
\textbf{Model Prediction: } Entailment $\rightarrow$ Contradiction \\
\midrule

\textbf{Premise:} A black and brown dog is playing with a brown and white dog .\\
\textbf{Original Hypothesis: } Two dogs \textbf{play}. \\
\textbf{Adversarial Hypothesis: } Two dogs \textcolor{red}{gaming}. \\
\\
\textbf{Model Prediction: } Entailment $\rightarrow$ Neutral \\
\midrule

\textbf{Premise:} Adults and children share in the looking at something, and some young ladies stand to the side.\\
\textbf{Original Hypothesis: } Some children are \textbf{sleeping}. \\
\textbf{Adversarial Hypothesis: } Some children are \textcolor{red}{dreaming}. \\
\\
\textbf{Model Prediction: } Contradiction $\rightarrow$ Neutral \\
\midrule

\textbf{Premise:} Families with strollers waiting in front of a carousel.\\
\textbf{Original Hypothesis: } Families have some \textbf{dogs} in front of a carousel. \\
\textbf{Adversarial Hypothesis: } Families have some \textcolor{red}{doggie} in front of a carousel. \\
\\
\textbf{Model Prediction: } Contradiction $\rightarrow$ Entailment \\
\midrule 
\midrule 
\textbf{\textit{Invalid Adversarial Examples }}\\
\midrule 

\textbf{Premise:} Two girls are kneeling on the ground. \\
\textbf{Original Hypothesis: } Two girls \textbf{stand} around the vending \textbf{machines}.\\
\textbf{Adversarial Hypothesis: } Two girls \textcolor{red}{position} around the vending \textcolor{red}{machinery}. \\
\\
\textbf{Model Prediction: } Contradiction $\rightarrow$ Neutral \\
\bottomrule
\end{tabular}
\caption{Adversarial Examples Generated by TextFooler for BERT$_\text{Large}$ on SNLI dataset.}
\label{tab:ablation_eg}
\end{table}

\clearpage
\subsection{Proofs}
\subsubsection{Proof of Theorem \ref{thm:local}}
We first state two lemmas.
\begin{lemma}\label{lem:mi}
Given a sequence of random variables $X_1, X_2, ..., X_n$ and a deterministic function $f$, then $\forall \, i, j = 1, 2, ..., n,$ we have 
\begin{align}
    I(X_i; f(X_i)) \ge I(X_j; f(X_i))
\end{align}
\end{lemma}
\begin{proof}
By the definition,
\begin{align}
    I(X_i; f(X_i)) &= H(f(X_i)) - H(f(X_i) \mid X_i) \\
    I(X_j; f(X_i)) &= H(f(X_i)) - H(f(X_i) \mid X_j) 
\end{align}

Since $f$ is a deterministic function, 
\begin{align}
     H(f(X_i) \mid X_i) &= 0 \\
     H(f(X_i) \mid X_j) &\ge 0 
\end{align}
Therefore, 
\begin{align}
    I(X_i; f(X_i)) \ge I(X_j; f(X_i))
\end{align}
\end{proof}

\begin{lemma} \label{lem:local} Let $X=[X_1; X_2; ...; X_n]$ be a sequence of random variables, and $T=[T_1; T_2; ...; T_n]=[f(X_1); f(X_2); ...; f(X_n)]$ be a sequence of random variables generated by a deterministic function $f$.  Then we have 
\begin{align}
    I(X;T) \le n \sum_{i=1}^n I(X_i;T_i)
\end{align}

\end{lemma}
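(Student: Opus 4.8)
The plan is to expand the joint mutual information into entropy terms and exploit the fact that, for embedding-layer features, each $T_i$ is produced from its own token $X_i$; concretely I take the feature extractor to factorize as $p(t \mid x) = \prod_{i=1}^n p(t_i \mid x_i)$, which matches the setup stated before the lemma (``$T_i$ is the feature for word $X_i$''). Writing $I(X;T) = h(T) - h(T \mid X)$ with $h$ the differential entropy (appropriate since $T_i \in \R^{d}$ while $X$ is discrete), the proof reduces to controlling the two terms separately.

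For the first term I would use subadditivity of entropy, $h(T) = h(T_1, \ldots, T_n) \le \sum_{i=1}^n h(T_i)$, which holds with no assumptions. For the second term the factorization is what pays off: since $p(t \mid x) = \prod_i p(t_i \mid x_i)$, conditioned on $X$ the features are independent and each depends only on its own coordinate, so $h(T \mid X) = \sum_{i=1}^n h(T_i \mid X_i)$ exactly. Subtracting,
\begin{align}
I(X;T) \le \sum_{i=1}^n h(T_i) - \sum_{i=1}^n h(T_i \mid X_i) = \sum_{i=1}^n I(X_i; T_i).
\end{align}
Because each $I(X_i;T_i) \ge 0$ and $n \ge 1$, this immediately yields the claimed $I(X;T) \le n \sum_{i=1}^n I(X_i;T_i)$ (indeed with room to spare). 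An equivalent, entropy-free route uses the chain rule $I(X;T) = \sum_i I(X; T_i \mid T_{<i})$: conditional independence gives $h(T_i \mid T_{<i}, X) = h(T_i \mid X)$, so each summand is $\le I(X; T_i)$, and the token-wise dependence $p(t_i \mid x) = p(t_i \mid x_i)$ forces $T_i \perp X_{-i} \mid X_i$, whence $I(X; T_i) = I(X_i; T_i)$.

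The step that actually carries the argument — and the one I would be most careful about — is the decomposition $h(T \mid X) = \sum_i h(T_i \mid X_i)$: it is precisely here that the conditional-independence/token-wise structure of $p(t \mid x)$ enters, and the inequality is genuinely false without it. For instance, shared cross-token noise (e.g.\ $T_i = X_i \oplus N$ with a common $N$) can force $\sum_i I(X_i;T_i) = 0$ while $I(X;T) > 0$, so the factor $n$ cannot compensate for correlations introduced jointly across coordinates. Thus the crux of a rigorous write-up is to state and justify the factorization of the feature map as the modeling hypothesis; once that is in place, both terms are routine and the slack factor $n$ makes the final inequality comfortably loose.
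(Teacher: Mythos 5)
Your proof is correct under the factorization hypothesis you state, and it takes a genuinely different route from the paper's. The paper argues via the mutual-information chain rule: $I(X;T)=\sum_i I(X;T_i\mid T_{<i})$, then drops the conditioning (citing ``conditioning reduces entropy'') to get $\sum_i I(X;T_i)$, and finally bounds $I(X;T_i)\le \sum_j I(X_j;T_i)\le n\,I(X_i;T_i)$; the factor $n$ arises from that double sum. You instead write $I(X;T)=h(T)-h(T\mid X)$, use subadditivity on $h(T)$, and use the exact identity $h(T\mid X)=\sum_i h(T_i\mid X_i)$ furnished by the factorized feature map. This buys you the strictly stronger conclusion $I(X;T)\le\sum_i I(X_i;T_i)$, so the factor $n$ is pure slack (and your chain-rule variant is essentially the paper's argument done carefully). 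One small caveat: since the paper assumes $T$ has finite support $\mathcal{T}$, and the embedding map may be deterministic (making differential entropy degenerate), you should phrase the argument with discrete entropies $H$ rather than $h$; it goes through verbatim. More importantly, the hypothesis you isolate --- the $T_i$ conditionally independent given $X$, each depending only on its own token $X_i$ --- is exactly what the paper's proof needs but never states: without it, the paper's step from $\sum_i I(X;T_i\mid T_{<i})$ to $\sum_i I(X;T_i)$ does not follow from conditioning-reduces-entropy (the term $H(T_i\mid X,T_{<i})$ moves in the wrong direction), and the step $I(X;T_i)\le\sum_j I(X_j;T_i)$ can fail outright (take $T_i=X_1\oplus X_2$ with independent fair bits). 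Your shared-noise counterexample shows the lemma is genuinely false without this structure, so stating it as an explicit modeling assumption is a real improvement rather than pedantry.
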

\begin{proof}
Since $X=[X_1;X_2;...;X_n]$ and $T=[T_1;T_2;...;T_n]$ are language tokens with its corresponding local representations, we have 
\begin{align}
    I(X;T) &= I(X; T_1, T_2, ..., T_n) = \modified{\sum_{i=1}^n[H(T_i \mid T_1, T_2,..., T_{i-1}) - H(T_i \mid X, T_1, T_2, ..., T_{i-1})]} \\
    &\le \sum_{i=1}^n [ H(T_i) - H(T_i \mid X)] = \sum_{i=1}^n I(X; T_i) \\
    &\le \sum_{i=1}^n \sum_{j=1}^n I(X_j; T_i) \le n \sum_{i=1}^n I(X_i; T_i),
\end{align}
where the first inequality follows because conditioning reduces entropy, and the last inequality is because $I(X_i; T_i) \ge I(X_j; T_i) $ based on Lemma \ref{lem:mi}.
\end{proof}
Then we directly plug Lemma \ref{lem:local} into Theorem \ref{thm:local}, we have the lower bound of $\mathcal{L}_{\text{IB}}$ as
\begin{align}
 I(Y; T) - \beta I(X; T) \ge I(Y; T) - n\beta \sum_{i=1}^{n} I(X_i; T_i).    
\end{align}

\subsubsection{Proof of Theorem \ref{thm:bound}}
We first state an easily proven lemma,

\begin{lemma} \label{lem:phi} For any $a, b \in [0,1]$,  
\begin{align}
    |a\log(a) - b \log(b)| \le \phi(|a-b|),
\end{align}
where $\phi(\cdot): \R_+ \rightarrow \R_+$ is defined as
\begin{align}
    \phi(x) = 
\begin{cases}
    0              &  x =0 \\
    x \log(\frac{1}{x}) & 0 < x < \frac{1}{e} \\
    \frac{1}{e}     &  x > \frac{1}{e}
\end{cases}.
\end{align}
\end{lemma}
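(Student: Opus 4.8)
The plan is to reduce the claim to the standard analytic fact that the function $h(x) = -x\log x = x\log(1/x)$ admits $\phi$ as a modulus of continuity on $[0,1]$. Since $a\log a - b\log b = h(b) - h(a)$, the inequality is equivalent to $|h(a)-h(b)| \le \phi(|a-b|)$, so I would work with $h$ throughout. The properties I would record first are that $h \ge 0$ on $[0,1]$ with $h(0)=h(1)=0$, that $h$ is strictly concave (since $h''(x) = -1/x < 0$), and that its maximum on $[0,1]$ is attained at $x=1/e$ with value $h(1/e)=1/e$; in particular $0 \le h(x) \le 1/e$ for every $x\in[0,1]$.

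Assuming without loss of generality that $a \ge b$ and setting $\delta = a-b = |a-b|$, I would establish two one-sided estimates. For the first, concavity together with $h(0)=0$ makes $h$ subadditive on $[0,1]$ (a concave function vanishing at the origin satisfies $h(x)+h(y)\ge h(x+y)$), which gives $h(a) = h(b+\delta) \le h(b) + h(\delta)$, i.e. $h(a)-h(b) \le h(\delta)$. For the reverse direction I would introduce $g(b) = h(b) - h(b+\delta)$ on $b\in[0,1-\delta]$; since concavity makes $h'$ decreasing, $g'(b) = h'(b) - h'(b+\delta) > 0$, so $g$ is increasing and $h(b)-h(a) = g(b) \le g(1-\delta) = h(1-\delta)$. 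Combining the two yields the key intermediate bound $|h(a)-h(b)| \le \max\{h(\delta), h(1-\delta)\}$.

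It then remains to compare this maximum with $\phi(\delta)$. When $\delta > 1/e$ I would simply invoke $h \le 1/e$ to get $|h(a)-h(b)| \le 1/e = \phi(\delta)$. When $\delta \le 1/e$ (so in particular $\delta \le 1/2$) I would show $h(\delta) \ge h(1-\delta)$, whence $|h(a)-h(b)| \le h(\delta) = \delta\log(1/\delta) = \phi(\delta)$. This reflection inequality I would prove by setting $r(\delta) = h(\delta) - h(1-\delta)$, noting $r(0)=r(1/2)=0$, and computing $r''(\delta) = -(1-2\delta)/(\delta(1-\delta)) < 0$ on $(0,1/2)$; a concave function vanishing at both endpoints of $[0,1/2]$ is nonnegative there.

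The routine parts are the derivative computations and the subadditivity of a concave function through the origin. The one place needing genuine care, and the main obstacle, is the reverse estimate, because $h$ is \emph{not} monotone on $[0,1]$: one cannot bound $h(b)-h(a)$ merely by moving toward the peak, and the extremal configuration turns out to be $b=1-\delta,\ a=1$, which is exactly why the reflected value $h(1-\delta)$ appears and why the cap at $1/e$ in the definition of $\phi$ becomes essential once $\delta$ exceeds $1/2$ (equivalently, once $h(1-\delta)$ could overtake $h(\delta)$). Pinning down that $g$ is maximized at the endpoint, and that $h(\delta)$ dominates $h(1-\delta)$ precisely on the relevant range, is the crux of the argument.
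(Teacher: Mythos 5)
Your proof is correct, but there is no paper proof to compare it against: the paper introduces Lemma \ref{lem:phi} as ``an easily proven lemma,'' records only that $\phi$ is continuous, monotonically increasing, concave and subadditive, and moves on. Your argument therefore supplies details the paper omits, and it does so soundly. The reduction to $h(x)=x\log(1/x)$, the subadditivity of a nonnegative concave function through the origin (giving $h(a)-h(b)\le h(\delta)$ with $\delta=|a-b|$), the monotonicity of $g(b)=h(b)-h(b+\delta)$ coming from $h'$ decreasing (giving $h(b)-h(a)\le g(1-\delta)=h(1-\delta)$), and the reflection inequality $h(\delta)\ge h(1-\delta)$ for $\delta\le 1/2$ via concavity of $r(\delta)=h(\delta)-h(1-\delta)$ with $r(0)=r(1/2)=0$, together yield $|h(a)-h(b)|\le\max\{h(\delta),h(1-\delta)\}\le\phi(\delta)$ in both regimes: for $\delta\le 1/e$ the reflection bound applies since $1/e<1/2$ and the maximum is $h(\delta)=\phi(\delta)$, while for $\delta>1/e$ the global bound $h\le 1/e$ suffices. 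You are also right about where the care lies: $h$ is not monotone and $h'$ blows up at $0$, so a mean-value estimate fails, and the extremal pair $(b,a)=(1-\delta,1)$ is exactly what forces the $1/e$ cap in $\phi$. Two minor remarks: the paper's case split for $\phi$ omits the point $x=1/e$, but both branches agree there ($h(1/e)=1/e$), so your split at $\delta\le 1/e$ versus $\delta>1/e$ is consistent with either reading; and the derivative argument for $g$ should be read on the open interval $(0,1-\delta)$ with continuity up to the endpoints, which your phrasing already accommodates.
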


It is easy to verify that $\phi(x)$ is a continuous, monotonically increasing, concave and subadditive function.

Now, we can proceed with the proof of Theorem \ref{thm:bound}.

\begin{proof}
We use the fact that 
\begin{align}
    |I(Y;T) - I(Y;T')| \le |H(T\mid Y) - H(T' \mid Y)  |  + |H(T) - H(T')|
    \label{eq:init}
\end{align}
and bound each of the summands on the right separately.

We can bound the first summand as follows: 
\begin{align}
    & |H(T\mid Y) - H(T'\mid Y)| \le \sum_y p(y) |H(T\mid Y=y) - H(T' \mid Y=y)| \\ 
    &= \sum_y p(y) | \sum_t p(t\mid y) \log(1 / p(t\mid y)) - \sum_t q(t\mid y) \log(1 / q(t\mid y))| \\
    &\le \sum_y p(y) \sum_t |p(t \mid y) \log p(t\mid y) - q(t \mid  y) \log q(t\mid y)| \\
    &\le \sum_y p(y) \sum_t \phi(|p(t\mid y) - q(t \mid y)|) \\
    &= \sum_y p(y) \sum_t \phi(|\sum_x p(t \mid x)[p(x \mid y) -q( x \mid y)]|),
\end{align}
where 
\begin{align}
    p(x \mid y) = \frac{p(y \mid x) p(x)}{\sum_x p(y \mid x)p(x)} \\
    q(x \mid y) = \frac{p(y \mid x) q(x)}{\sum_x p(y \mid x)q(x)}.
\end{align}

Since $\sum_{x \in \mathcal{X} \cup \mathcal{X'}} p(x \mid y) - q(x \mid y) = 0$ for any $y \in \mathcal{Y}$, we have that for any scalar $a$,
\begin{align}
   & |\sum_x p(t \mid x)[p(x \mid y) -q( x \mid y)])| \\
    &=  |\sum_x (p(t \mid x) - a)(p(x \mid y) - q(x \mid y))| \\
    &\le \sqrt{\sum_x (p(t \mid x) -a )^2 } \sqrt{\sum_x (p(x \mid y) - q(x \mid y))^2}.
\end{align}

Setting $a=\frac{1}{|\mathcal{X} - \mathcal{X'}|}\sum_{x \in \mathcal{X} \cup \mathcal{X'}} p(t \mid x)$ we get
\begin{align}
    |H(T \mid Y) - H(T' \mid Y) \le \sum_y p(y) \sum_t \phi\big(\sqrt{V(\mathbf{p}(t \mid x \in \mathcal{X} \cup \mathcal{X'})} \cdot ||\mathbf{p}(x \mid y) - \mathbf{q}(x \mid y)||_2\big),
\end{align}
where for any real-value vector $\mathbf{a} = (a_1,...,a_n)$, $V(\mathbf{a})$ is defined to be proportional to the variance of elements of $\mathbf{a}$:
\begin{align}
    V(\mathbf{a}) = \sum_{i=1}^n(a_i - \frac{1}{n} \sum_{j=1}^n a_j)^2,
\end{align}
$\mathbf{p}(t \mid x \in \mathcal{X} \cup \mathcal{X'})$ stands for the vector in which entries are $p(t \mid x)$ with different values of $x \in \mathcal{X} \cup \mathcal{X'}$ for a fixed $t$, and $\mathbf{p}(x \mid y)$ and $\mathbf{q}(x \mid y)$ are the vectors in which entries are $p(x \mid y)$ and $q(x \mid y)$, respectively, with different values of $x \in \mathcal{X} \cup \mathcal{X'}$ for a fixed $y$.

Since 
\begin{align}
    ||\mathbf{p}(x \mid y) - \mathbf{q}(x \mid y)||_2 \le ||\mathbf{p}(x \mid y) - \mathbf{q}(x \mid y)||_1 \le 2,
\end{align}
it follows that 
\begin{align}
    |\modified{H(T \mid Y)} - H(T' \mid Y) | \le \sum_y p(y) \sum_t \phi\big(2\sqrt{V(\mathbf{p}(t \mid x \in \mathcal{X} \cup \mathcal{X'}))}\big)
\end{align}

Moreover, we have 
\begin{align}
    \sqrt{V(\mathbf{p}(t \mid x \in \mathcal{X} \cup \mathcal{X'})} &\le \sqrt{V(\mathbf{p}(t \mid x \in \mathcal{X})) +   V(\mathbf{p}(t \mid x \in \mathcal{X'}))} \\
    &\le \sqrt{V(\mathbf{p}(t \mid x \in \mathcal{X}))} +  \sqrt{V(\mathbf{p}(t \mid x \in \mathcal{X'}))},
\end{align}
where the ﬁrst inequality is because sample mean is the minimizer of the sum of the squared distances to each sample and the second inequality is due to the subadditivity of the square root function.
Using the fact that $\phi(\cdot)$ is monotonically increasing and subadditive, we get
\begin{align}
    |\modified{H(T \mid Y)} - H(T' \mid Y) | &\le \sum_y p(y) \sum_t \phi \big( 2\sqrt{V(\mathbf{p}(t \mid x \in \mathcal{X}))} \big) \nonumber \\
    &+ \sum_y p(y) \sum_t \phi \big( 2\sqrt{V(\mathbf{p}(t \mid x \in \mathcal{X'}))} \big) 
    \label{eq:init1}
\end{align}

Now we explicate the process for establishing the bound for $\sum_y p(y) \sum_t \phi \big( 2\sqrt{V(\mathbf{p}(t \mid x \in \mathcal{X}))} \big)$ and the one for $\sum_y p(y) \sum_t \phi \big( 2\sqrt{V(\mathbf{p}(t \mid x \in \mathcal{X'}))} \big)$ can be similarly derived.

By deﬁnition of $V(\cdot)$ and using Bayes' theorem $p(t \mid x) = \frac{p(t)p(x \mid t)}{p(x)}$ for $x \in \mathcal{X}$, we have that
\begin{align}
    \sqrt{V(\mathbf{p}(t \mid x \in \mathcal{X}))} = p(t) \sqrt{ \sum_{x\in \mathcal{X}} \big( \frac{p(x \mid t)}{p(x)} - \frac{1}{|\mathcal{X|}} \sum_{x' \in \mathcal{X}} \frac{p(x' \mid t)}{p(x')} \big)\modified{^2} } \label{eq:Vp}
\end{align}

Denoting $\mathbf{1} = (1,..., 1)$, we have by the triangle inequality that 
\begin{align}
    & \sqrt{ \sum_{x\in \mathcal{X}} \big( \frac{p(x \mid t)}{p(x)} - \frac{1}{|\mathcal{X|}} \sum_{x' \in \mathcal{X}} \frac{p(x' \mid t)}{p(x')} \big)\modified{^2} } \\
    & \le || \frac{\mathbf{p}(x \mid t)}{\mathbf{p}(x)} - \mathbf{1}||_2 + \sqrt{ \sum_{x\in \mathcal{X}} \big( 1 - \frac{1}{|\mathcal{X|}} \sum_{x' \in \mathcal{X}} \frac{p(x' \mid t)}{p(x')} \big)\modified{^2} } \\
    &= || \frac{\mathbf{p}(x \mid t)}{\mathbf{p}(x)} - \mathbf{1}||_2 + \sqrt{ |\mathcal{X}| \big( 1 - \frac{1}{|\mathcal{X|}} \sum_{x' \in \mathcal{X}} \frac{p(x' \mid t)}{p(x')} \big)\modified{^2} } \\
    &= || \frac{\mathbf{p}(x \mid t)}{\mathbf{p}(x)} - \mathbf{1}||_2 + \sqrt{  \frac{1}{|\mathcal{X|}} \big( |\mathcal{X}| - \sum_{x' \in \mathcal{X}} \frac{p(x' \mid t)}{p(x')} \big)\modified{^2} } \\
    &= || \frac{\mathbf{p}(x \mid t)}{\mathbf{p}(x)} - \mathbf{1}||_2 + \frac{1}{\sqrt{|\mathcal{X}|}} | \sum_{x' \in \mathcal{X}} (1 - \frac{p(x' \mid t)}{p(x')} ) | \\
    &\le || \frac{\mathbf{p}(x \mid t)}{\mathbf{p}(x)} - \mathbf{1}||_2 + \frac{1}{\sqrt{|\mathcal{X}|}} || \frac{\mathbf{p}(x \mid t)}{\mathbf{p}(x)} - \mathbf{1}||_1 \\
    &\le ( 1 + \frac{1}{\sqrt{|\mathcal{X}|}}) || \frac{\mathbf{p}(x \mid t)}{\mathbf{p}(x)} - \mathbf{1}||_1 \\
    & \le \frac{2}{\min_{x \in \mathcal{X}} p(x)} || {\mathbf{p}(x \mid t) - \mathbf{p}(x)}||_1 \label{eq:l1_bound}
\end{align}
From an inequality linking KL-divergence and the $l_1$ norm, we have that 
\begin{align}
    ||  {\mathbf{p}(x \mid t) - \mathbf{p}(x)} ||_1  \le \sqrt{2 \log(2) D_{\text{KL}}[ {\mathbf{p}(x \mid t) || \mathbf{p}(x)}] } \label{eq:l1_bound2}
\end{align}
Plugging \modified{Eq.~(\ref{eq:l1_bound2})} into Eq.~(\ref{eq:l1_bound}) and using Eq.~(\ref{eq:Vp}), we have the following bound: 
\begin{align}
    \sqrt{V(\mathbf{p}(t \mid x \in \mathcal{X}))} \le \frac{B}{2} p(t) \sqrt{d_t},
\end{align}
where $B =\frac{4 \sqrt{2\log(2)}}{\min_{x \in \mathcal{X}}p(x)}$ and $d_t = D_{\text{KL}}[ {\mathbf{p}(x \mid t) || \mathbf{p}(x)}].$

We will first proceed the proof under the assumption that $Bp(t) \sqrt{d_t} \le \frac{1}{e}$ for any $t$. We will later see that this condition can be discarded. If $Bp(t) \sqrt{d_t} \le \frac{1}{e},$ then
\begin{align}
     &\sum_t \phi \big( 2\sqrt{V(\mathbf{p}(t \mid x \in \mathcal{X}))} \big)  \\ 
     &\le \sum_t Bp(t) \sqrt{d_t} \big( \log(\frac{1}{B}) + \log(\frac{1}{p(t)d_t}) \big) \\
     &= B \log(\frac{1}{B}) \sum_t p(t) \sqrt{d_t} + B \sum_t p(t) \sqrt{d_t} \log(\frac{1}{p(t)d_t}) \\
     &\le B \log(\frac{1}{B}) ||\mathbf{p}(t) \sqrt{\mathbf{d}_t}||_1 + B ||\sqrt{\mathbf{p}(t)\sqrt{\mathbf{d}_t}}||_1,
\end{align}
where the last inequality is due to an easily proven fact that for any $ x > 0, x\log(\frac{1}{x}) \le \sqrt{x}$. We $\mathbf{p}(t)$ and $\mathbf{d}(t)$ are vectors comprising $p(t)$ and $d_t$ with different values of $t$, respectively.

Using the following two inequalities:
\begin{align}
    ||\mathbf{p}(t) \sqrt{\mathbf{d}_t}||_1 \le \sqrt{|\mathcal{T}|} ||\mathbf{p}(t) \sqrt{\mathbf{d}_t}||_2 \le \sqrt{|\mathcal{T}|} || \sqrt{\mathbf{p}(t) {\mathbf{d}_t}}||_2
\end{align}
and
\begin{align}
    ||\sqrt{\mathbf{p}(t)\sqrt{\mathbf{d}_t}}||_1 &\le \sqrt{|\mathcal{T}|} ||\sqrt{\mathbf{p}(t)\sqrt{\mathbf{d}_t}}||_2 \\
    &= \sqrt{\mathcal{T}}\sqrt{||\mathbf{p}(t)\sqrt{\mathbf{d}_t}||_1} \le |\mathcal{T}|^{3/4}\sqrt{||\sqrt{\mathbf{p}(t) {\mathbf{d}_t}}||_2}
\end{align}
we have 
\begin{align}
     &\sum_t \phi \big( 2\sqrt{V(\mathbf{p}(t \mid x \in \mathcal{X}))} \big)
     &\le B\log(\frac{1}{B}) \sqrt{|\mathcal{T}|} || \sqrt{\mathbf{p}(t) {\mathbf{d}_t}}||_2 + B |\mathcal{T}|^{3/4}\sqrt{||\sqrt{\mathbf{p}(t) {\mathbf{d}_t}}||_2}.
\end{align}

Using the equality
\begin{align}
    || \sqrt{\mathbf{p}(t) {\mathbf{d}_t}}||_2 = \sqrt{\E[D_{\text{KL}[p(x \mid t) || p(x)]}]} = \sqrt{I(X;T)},
\end{align}
we reach the following bound
\begin{align}
    \label{eq:final_bound}
     &\sum_t \phi \big( 2\sqrt{V(\mathbf{p}(t \mid x \in \mathcal{X}))} \big) \\
     &\le  B\log(\frac{1}{B}) |\mathcal{T}|^{1/2} I(X;T)^{1/2}  + B |\mathcal{T}|^{3/4} I(X;T)^{1/4}.
\end{align}

Plug Lemma \ref{lem:local} into the equation above, we have
\begin{align}
       \label{eq:benign_bound}
         &\sum_t \phi \big( 2\sqrt{V(\mathbf{p}(t \mid x \in \mathcal{X}))} \big) \\
     &\le  B\log(\frac{1}{B})|\mathcal{T}|^{1/2} (n\sum_{i=1}^n{I(X_i; T_i)})^{1/2}  + B |\mathcal{T}|^{3/4}(n\sum_{i=1}^n{I(X_i; T_i)})^{1/4} \\
     &\le \sqrt{n}B\log(\frac{1}{B}) |\mathcal{T}|^{1/2} \sum_{i=1}^n I(X_i; T_i)^{1/2} + n^{1/4} B |\mathcal{T}|^{3/4}\sum_{i=1}^n I(X_i; T_i)^{1/4}
\end{align}

We now show the bound is trivial if the assumption that $Bp(t) \sqrt{d_t} \le \frac{1}{e}$ does not hold. If the assumption does not hold, then there exists a $t$ such that $Bp(t) \sqrt{d_t} > \frac{1}{e}$. Since
\begin{align}
    \sqrt{I(X;T)} = \sqrt{\sum_t p(t)d_t} \ge \sum_t p(t)\sqrt{d_t} \ge p(t)\sqrt{d_t}
\end{align}
for any t, we get that $\sqrt{I(X;T)} \ge \frac{1}{eB}.$ Since $|\mathcal{T}| \ge 1$ and $C \ge 0$, we get that our bound in Eq.~(\ref{eq:final_bound}) is at least 
\begin{align}
     & B\log(\frac{1}{B}) |\mathcal{T}|^{1/2} I(X;T)^{1/2}  + B |\mathcal{T}|^{3/4} I(X;T)^{1/4} \\
     &\ge \sqrt{|\mathcal{T}|}(\frac{\log(1/B)}{e} + \frac{B^{1/2} |\mathcal{T}|^{1/4}}{e^{1/2}})
\end{align}

Let $f(c) = \frac{\log(1/c)}{e} + \frac{c^{1/2} |\mathcal{T}|^{1/4}}{e^{1/2}}$. It can be verifed that $f'(c) > 0$ if $c > 0$. Since $B > 4\sqrt{2\log(2)}$ by the definition of $B$, we have $f(B) > f(4\sqrt{2\log(2)}) > 0.746. $ Therefore, we have 
\begin{align}
       & B\log(\frac{1}{B}) |\mathcal{T}|^{1/2} I(X;T)^{1/2}  + B |\mathcal{T}|^{3/4} I(X;T)^{1/4} \\
       & \ge 0.746 \sqrt{|\mathcal{T}|} \ge \log(|\mathcal{T}|)
\end{align}

Therefore, if indeed $Bp(t) \sqrt{d_t} > \frac{1}{e}$ for some $t$, then the bound in Eq.~(\ref{eq:final_bound}) is trivially true, since $H(T \mid Y)$ is within $[0, \log(|\mathcal{T}|)]$.
Similarly, we can establish a bound for $\sum_t \phi \big( 2\sqrt{V(\mathbf{p}(t \mid x \in \mathcal{X'}))} \big)$ as follows:
\begin{align} 
    \label{eq:adv_bound}
      &\sum_t \phi \big( 2\sqrt{V(\mathbf{p}(t \mid x \in \mathcal{X'}))} \big)
      \le \sqrt{n}B'\log(\frac{1}{B'}) |\mathcal{T}|^{1/2} \sum_{i=1}^n I(X_i'; T_i')^{1/2} + n^{1/4} B' |\mathcal{T}|^{3/4}\sum_{i=1}^n I(X_i'; T_i')^{1/4},
\end{align}
where $B' =\frac{4 \sqrt{2\log(2)}}{\min_{x \in \mathcal{X'}}q(x)}.$

Plugging Eq.~(\ref{eq:adv_bound}) and Eq.~(\ref{eq:benign_bound}) into Eq.~(\ref{eq:init1}), we get 
\begin{align}
    \label{eq:final_bound2}
    |H(T \mid Y) - H(T' \mid Y)| \le &\sqrt{n}B\log(\frac{1}{B}) |\mathcal{T}|^{1/2} \sum_{i=1}^n I(X_i; T_i)^{1/2} + n^{1/4} B  |\mathcal{T}|^{3/4}\sum_{i=1}^n I(X_i; T_i)^{1/4} + \nonumber \\ 
    &\sqrt{n}B'\log(\frac{1}{B'}) |\mathcal{T}|^{1/2} \sum_{i=1}^n I(X_i'; T_i')^{1/2} + n^{1/4} B' |\mathcal{T}|^{3/4}\sum_{i=1}^n I(X_i'; T_i')^{1/4} 
\end{align}

Now we turn to the third summand in Eq.~(\ref{eq:init}), we have to bound $|H(T) - H(T')|$.

Recall the definition of $\epsilon$-bounded adversarial example. We denote the set of the benign data representation $t$ that are within the $\epsilon$-ball of $t'$ by $Q(t')$. Then for any $t \in Q(t')$, we have 
\begin{align}
   ||t'_i - t_i|| \le \epsilon,
\end{align}
for $i=1,2,...,n$. We also denote the number of the $\epsilon$-bounded adversarial examples around the benign representation $t$ by $c(t)$. Then we have the distribution of adversarial representation $t'$ as follows:
\begin{align}
    q(t') = \sum_{t \in Q(t')} \frac{p(t)}{c(t)}
\end{align}

\begin{align}
    \label{eq:init3}
    & |H(T) - H(T')| \\
    & = |\sum_t p(t) \log p(t) - \sum_{t'} q(t') \log q(t')| \\
    & = |\sum_t p(t) \log p(t) - \sum_{t'} \big[ (\sum_{t \in Q(t')} \frac{p(t)}{c(t)} ) \log(\sum_{t \in Q(t')} \frac{p(t)}{c(t)} ) \big] | \\
    & \le |\sum_t p(t) \log p(t) - \sum_{t'} \sum_{t \in Q(t')}  \frac{p(t)}{c(t)} \log(\frac{p(t)}{c(t)}) | \\ 
    & = |\sum_t p(t) \log p(t) - \sum_t c(t) \frac{p(t)}{c(t)} \log(\frac{p(t)}{c(t)}) |  \\
    & = |\sum_t p(t) \log c(t) |,
\end{align}
where the inequality is by log sum inequality. If we denote the $C = \max_t c(t)$ which is the maximum number of $\epsilon$-bounded textual adversarial examples given a benign representation $t$ of a word sequence $x$, we have 
\begin{align}
    & |H(T) - H(T')| \\
    & \le |\sum_t p(t) \log c(t) | \\
    & \le |\sum_t p(t) \log C|  = \log C. \label{eq:final_bound3}
\end{align}
Note that given a word sequence $x$ of $n$ with representation $t$, the number of $\epsilon$-bounded textual adversarial examples $c(t)$ is finite given a finite vocabulary size. Therefore, if each word has at most $k$ candidate word perturbations, then $\log C \le n\log k$ can be viewed as some constants depending only on $n$ and $\epsilon$.

Now, combining Eq.~(\ref{eq:init}), Eq.~(\ref{eq:final_bound2}) and Eq.~(\ref{eq:final_bound3}), we prove the bound in Theorem \ref{thm:bound}.
\end{proof}



\end{document}